\newtheorem{theorem}{Theorem}[section]
\newtheorem{lemma}[theorem]{Lemma}
\newtheorem{definition}{Definition}[section]
\newcommand{\bsmat}{\begin{bmatrix} }
\newcommand{\esmat}{\end{bmatrix} }
\begin{document}

\title{\bf\Huge Variational Flow Graphical Model}

\author{\vspace{0.5in}\\\textbf{Shaogang Ren, Belhal Karimi, Dingcheng Li, Ping Li} \\\\
Cognitive Computing Lab\\
Baidu Research\\
10900 NE 8th St. Bellevue, WA 98004, USA\\\\
  \texttt{\{renshaogang, belhal.karimi, dingchengl,  pingli98\}@gmail.com}
}

\date{\vspace{0.5in}}
\maketitle

\begin{abstract}\vspace{0.3in}
\noindent\footnote{This work was initially submitted in 2020.}This paper introduces a novel approach to embed flow-based models with hierarchical structures. The proposed framework is named Variational Flow Graphical (VFG) Model.   VFGs  learn the representation of high dimensional data via a message-passing scheme by integrating flow-based functions through variational inference.   By leveraging the expressive power of neural networks,   VFGs produce a representation of the data using a lower dimension, thus overcoming the drawbacks of many flow-based models, usually requiring a high dimensional latent space involving many trivial variables.  Aggregation nodes are introduced in the VFG models to integrate forward-backward hierarchical information via a message passing scheme. Maximizing  the evidence lower bound (ELBO) of  data likelihood  aligns the forward and backward messages in each aggregation node  achieving a consistency node state. Algorithms have been developed to learn model parameters through gradient updating  regarding the ELBO objective.

\vspace{0.1in}

\noindent The consistency of  aggregation nodes  enable  VFGs   to be applicable in  tractable inference on graphical structures. Besides representation learning and numerical inference,  VFGs provide a new approach for distribution modeling  on datasets with graphical latent structures. Additionally,  theoretical study  shows that VFGs are universal approximators by leveraging  the implicitly invertible flow-based  structures. With flexible graphical structures and superior excessive power, VFGs could potentially be used  to improve  probabilistic inference.

\vspace{0.1in}
\noindent In the experiments,  VFGs   achieves improved evidence lower bound (ELBO) and likelihood values on multiple datasets.  We also highlight the benefits of our VFG model on  missing entry imputation  for  datasets with graph structures.  Multiple experiments on synthetic and real-world datasets confirm the benefits of the proposed method and potentially broad applications.

\end{abstract}

\newpage

\section{Introduction} \label{sec:intro}

Learning tractable  distribution or density functions from datasets has broad applications.
Probabilistic graphical models (PGMs) provide a unifying framework for capturing complex dependencies among random variables~\citep{bishop2006pattern,wainwright2008graphical,koller2009probabilistic}.
There are two general approaches for probabilistic inference with PGMs and other models: exact inference and approximate inference. In most cases, exact inference is either computationally involved or simply intractable.
Variational inference (VI), stemmed from statistical physics, is computationally efficient and is applied to tackle  large-scale inference problems~\citep{anderson1987mean,hinton1993keeping,jordan1999introduction,ghahramani1999variational,hoffman2013stochastic,blei2017variational,fang2021variational}.
In variational inference, mean-field approximation~\citep{anderson1987mean,hinton1993keeping,xing2012generalized} and variational message passing~\citep{bishop2003vibes,winn2005variational} are two common approaches.
These  methods are limited by the choice of distributions that are inherently unable to recover the true posterior, often leading to a loose approximation.

\vspace{0.1in}

To tackle the probabilistic inference problem, alternative models have been developed under the name of \emph{tractable probabilistic models~(TPMs)}. They include probabilistic decision graphs~\citep{jaeger2006learning}, arithmetic circuits~\citep{darwiche2003differential}, and-or search spaces~\citep{marinescu2005and},
multi-valued decision diagrams~\citep{dechter2007and},
 sum-product nets~\citep{sanchez2021sum}, probabilistic sentential decision diagrams~\citep{kisa2014probabilistic}, and probabilistic circuits~(PCs)~\citep{choi2020probabilistic}. PCs leverage the recursive mixture models and distributional factorization to establish tractable probabilistic inference. PCs also aim to attain a TPM with improved expressive~power. The recent GFlowNets~\citep{bengio2021gflownet} also target tractable probabilistic inference on different structures.

\vspace{0.1in}

Apart from probabilistic inference, generative models have been developed to model high dimensional datasets and to learn   meaningful  hidden data representations by leveraging the approximation power of neural networks. These models also provide a possible approach to generate new samples from  underlining distributions. Variational Auto-Encoders (VAEs)~\citep{kingma2013auto} and Generative Adversarial Networks (GAN)~\citep{Goodfellow14,arjovsky2017towards,karras2019style,zhu2017,yin2020meta,ren2020estimate} are widely applied to different categories of datasets. Flow-based models~\citep{Dinh2016DensityEU, dinh2014nice,rezende2015variational,berg2018sylvester,ren2021causal} leverage invertible neural networks and can estimate the density values of data samples as well. Energy-based models (EBMs)~\citep{ZhuWM98,lecun2006tutorial,hinton2012practical,XieLZW16,nijkamp2019learning,zhao2020learning,zheng2021patchwise} define an unnormalized probability density function of data, which is the exponential of the negative energy function. Unlike TPMs, it is usually difficult to directly use generative models to perform probabilistic inference~on~datasets.

\vspace{0.1in}

In this paper, we  introduce \textsc{Variational Flow Graphical (VFG)} models. By leveraging the expressive power of neural networks, VFGs can learn  latent representations from data.
VFGs also follow the stream of tractable neural networks that allow to perform inference on graphical structures. Sum-product networks~\citep{sanchez2021sum} and probabilistic circuits~\citep{choi2020probabilistic} are falling into this type of models as well. Sum-product networks and probabilistic circuits depend on mixture models and probabilistic factorization in graphical structure for inference. Whereas, VFGs rely on the consistency of  aggregation nodes in graphical structures to achieve tractable inference.  Our contributions are summarized as follows.

\vspace{0.2in}
\noindent \textbf{Summary of contributions.}
Dealing with high dimensional data using graph structures exacerbates the systemic inability for effective distribution modeling and efficient inference. To overcome these limitations, we propose the VFG model to achieve the following goals:
\begin{itemize}
\item \textbf{Hierarchical and flow-based:} VFG is a  novel graphical architecture uniting the hierarchical latent structures and flow-based models.  Our model outputs a tractable posterior distribution used as an approximation of the true posterior of the hidden node states in the considered graph structure.

\item \textbf{Distribution modeling:}
Our theoretical analysis shows that VFGs are universal approximators.
In the experiments, VFGs can  achieve improved evidence lower bound (ELBO) and likelihood values by leveraging  the implicitly invertible flow-based model structure.

\item \textbf{Numerical inference:}  Aggregation nodes are introduced in the model to integrate hierarchical information through a variational forward-backward message passing scheme.  We highlight the benefits of our VFG model on  applications: the missing entry imputation problem and the numerical inference on graphical data.
\end{itemize}

\noindent Moreover, experiments  show that our model achieves to disentangle the factors of variation underlying high dimensional input data.

\vspace{0.2in}

\noindent\textbf{Roadmap:} Section~\ref{sec:prelim} presents important concepts used in the paper.
Section~\ref{sec:tech} introduces the Variational Flow Graphical  (VFG) model. The approximation property of VFGs is discussed in Section~\ref{sec:approx}.
Section~\ref{sec:algrithm} provides the algorithms used to train VFG models.
Section~\ref{sec:infer} discusses how to perform inference with a  VFG model. Section~\ref{sec:numerical} showcases the advantages of VFG on various tasks. Section~\ref{sec:discuss} and Section~\ref{sec:conclusion} provide a discussion and conclusion~of~the~paper.

\vspace{0.1in}
\section{Preliminaries}\label{sec:prelim}
\vspace{0.1in}

We introduce the general principles and notations of  variational inference  and  flow-based models in this section.

\vspace{0.2in}

\noindent\textbf{Notation:} We use $[L]$ to denote the set $ \{1, \cdots, L\}$, for all $L >1$.  $\textbf{\text{KL}}(p || q ) := \int_{\mathcal{Z}} p(z) \log(p(z)/q(z)) \mathrm{d}z$  is the Kullback-Leibler divergence from $q$ to $p$, two probability density functions defined on the set $\mathcal{Z} \subset \mathbb{R}^m$ for any dimension $m >0$.

\vspace{0.2in}

\noindent\textbf{Variational Inference:}
Following the setting discussed above, the functional mapping $\mathbf{f}: \mathcal{Z} \xrightarrow{} \mathcal{X} $ can be viewed as a decoding process and the mapping $\mathbf{f}^{-1}$: $ \mathcal{X} \xrightarrow{}  \mathcal{Z}$ as an encoding one between  random variables $\mathbf{z} \in \mathcal{Z}$  and $\mathbf{x} \in \mathcal{X}$ with densities $\mathbf{z} \sim p(\mathbf{z}), \mathbf{x} \sim p_{\theta}(\mathbf{x}|\mathbf{z}).$
To learn the parameters $\theta$, VI employs a parameterized family of so-called variational distributions $q_{\phi}(\mathbf{z}|\mathbf{x})$ to approximate the true posterior $p(\mathbf{z}|\mathbf{x}) \varpropto  p(\mathbf{z})  p_{\theta}(\mathbf{x}|\mathbf{z})$.
The optimization problem of VI can be shown to be equivalent to maximizing the following evidence lower bound (ELBO) objective, noted $\mathcal{L}(\mathbf{x}; \theta, \phi)$:
\begin{align}\label{eq:vi_elbo}
 & \log p(\mathbf{x})
    \geqslant  \mathcal{L}(\mathbf{x}; \theta, \phi) =\mathbb{E}_{q_{\phi}(\mathbf{z}|\mathbf{x})} \big[\log p_{\theta}(\mathbf{x}|\mathbf{z}) \big] - \textbf{\text{KL}}(q_{\phi}(\mathbf{z}|\mathbf{x})||p(\mathbf{z}))  \, .
\end{align}

\newpage

In Variational Auto-Encoders~(VAEs,~\citep{kingma2013auto,rezende2014stochastic}), the calculation of the reconstruction term requires sampling from the posterior distribution along with using the reparameterization trick, i.e.,
\begin{align} \label{eq:vae_recon}
\mathbb{E}_{q_{\phi}(\mathbf{z}|\mathbf{x})} \big[\log p_{\theta}(\mathbf{x}|\mathbf{z}) \big] \simeq \frac{1}{U}\sum_{u=1}^U \log p(\mathbf{x}| \mathbf{z}_{u}). \end{align}
Here $U$ is the number of latent variable samples drawn from the posterior $q_{\phi}(\mathbf{z}|\mathbf{x})$ regarding data $\mathbf{x}$.

\vspace{0.1in}
\noindent\textbf{Flow-based Models:}
Flow-based models~\citep{Dinh2016DensityEU, dinh2014nice,rezende2015variational,berg2018sylvester}
correspond to a probability distribution transformation using  a sequence of invertible and differentiable mappings, noted $\mathbf{f}: \mathcal{Z} \xrightarrow[]{} \mathcal{X}$. By defining the aforementioned invertible maps $\{\mathbf{f}_{\ell} \}_{\ell =1}^L$, and by the chain rule and inverse function theorem, the variable $\mathbf{x}=\mathbf{f}(\mathbf{z})$ has a tractable probability density function~(pdf) given as:
\begin{align}\label{eq:flow}
\log p_{\theta}(\mathbf{x}) =  \log p(\mathbf{z}) + \sum_{i=1}^L\log \bigg| \text{det} ( \frac{\partial \mathbf{h}^i } {\partial \mathbf{h}^{i-1}}) \bigg| \, ,
\end{align}
where we have $\mathbf{h}^0 = \mathbf{x}$ and $\mathbf{h}^L = \mathbf{z}$ for conciseness.
The scalar value $\log |\text{det}( \partial \mathbf{h}^i/\partial \mathbf{h}^{i-1})|$ is the logarithm of the absolute value of the determinant of the Jacobian matrix $\partial \mathbf{h}^i/\partial \mathbf{h}^{i-1}$, also called the log-determinant.
Eq.~(\ref{eq:flow}) yields a simple mechanism to build families of distributions that, from an initial density and a succession of invertible transformations, returns tractable density functions that one can sample from. \citet{rezende2015variational} propose an approach to construct flexible posteriors by transforming  a simple base posterior with a sequence of flows. Firstly a stochastic latent variable is draw from base posterior $\mathcal{N}(\mathbf{z}_0|\mathbf{\mu}(\mathbf{x}), \mathbf{\sigma}(\mathbf{x}) )$. With $K$ flows, latent variable $\mathbf{z}_0$ is transformed to $\mathbf{z}_k$.The reformed  EBLO is given by
\begin{align*}
\mathcal{L}(\mathbf{x}; \theta, \phi) & =  \mathbb{E}_{q_{\phi}} \big[\log p_{\theta}(\mathbf{x},\mathbf{z})-\log q_{\phi}(\mathbf{z}|\mathbf{x}) \big] \\
&= \mathbb{E}_{q_{0}} \big[\log p_{\theta}(\mathbf{x},\mathbf{z}) - \log q_{0}(\mathbf{z}_0|\mathbf{x}) \big] + \mathbb{E}_{q_{0}} \big[\sum_{k=1}^K \log\big|\det(\frac{\partial \mathbf{f}_k( \mathbf{z}_k; \psi_k)}{\partial \mathbf{z}_k}) \big| \big].
\end{align*}
Here $\mathbf{f}_k$ is the $k$-th flow with parameter $\psi_k$, i.e., $\mathbf{z}_K = \mathbf{f}_K \circ \cdots  \mathbf{f}_2 \circ  \mathbf{f}_1(\mathbf{z}_0)$. The flows are considered as functions of data sample $\mathbf{x}$, and they determine the final distribution~in~amortized~inference.
Several recent models have been proposed by leveraging the invertible  flow-based models. Graphical normalizing flow~\citep{wehenkel2021graphical}  learns a DAG structure from the input data under  sparse penalty and maximum likelihood estimation.
The bivariate causal discovery method proposed in~\citet{khemakhem2021causal} relies on  autoregressive structure of flow-based models and the asymmetry of  log-likelihood ratio  for cause-effect pairs. In this paper, we propose a framework that generalizes flow-based models~\citep{Dinh2016DensityEU, dinh2014nice,rezende2015variational,berg2018sylvester} to graphical variable inference.

\newpage

\section{Variational Flow Graphical Model}\label{sec:tech}

Assume $\mathbf{k}$ sections in the data samples, i.e., $\mathbf{x} = [\mathbf{x}^{(1)}, ..., \mathbf{x}^{(k)}]$, and  a relationship among these sections and the corresponding  latent variable.
Then, it is possible to define a graphical model using normalizing flows, as introduced Section~\ref{sec:prelim}, leading to exact latent variable inference and log-likelihood evaluation of data samples.

A VFG model  $\mathbb{G}=\{\mathcal{V}, \mathbf{f}\} $ consists of  a  node set  ($\mathcal{V}$) and an edge set ($\mathbf{f}$). An edge can be either a flow function or an identity function. There are two types of nodes in a VFG: \emph{aggregation} nodes and \emph{non-aggregation} nodes.
A non-aggregation node connects with another node with a  flow function or an identity function. An aggregation node has multiple children, and it  connects  each of  them with an identity function. Figure~\ref{fig:tree}-Left gives an illustration of an aggregation node and  Figure~\ref{fig:tree}-Right shows a tree VFG model.
Unlike classical graphical models, a node in a VFG model may represent   a single variable or multiple variables. Moreover, each latent variable belongs to only one node in a VFG. In the following sections, identity function is considered as a special case of flow functions.

\begin{figure}[h]

\vspace{-0.1in}

\begin{center}
 \includegraphics[width=5in]{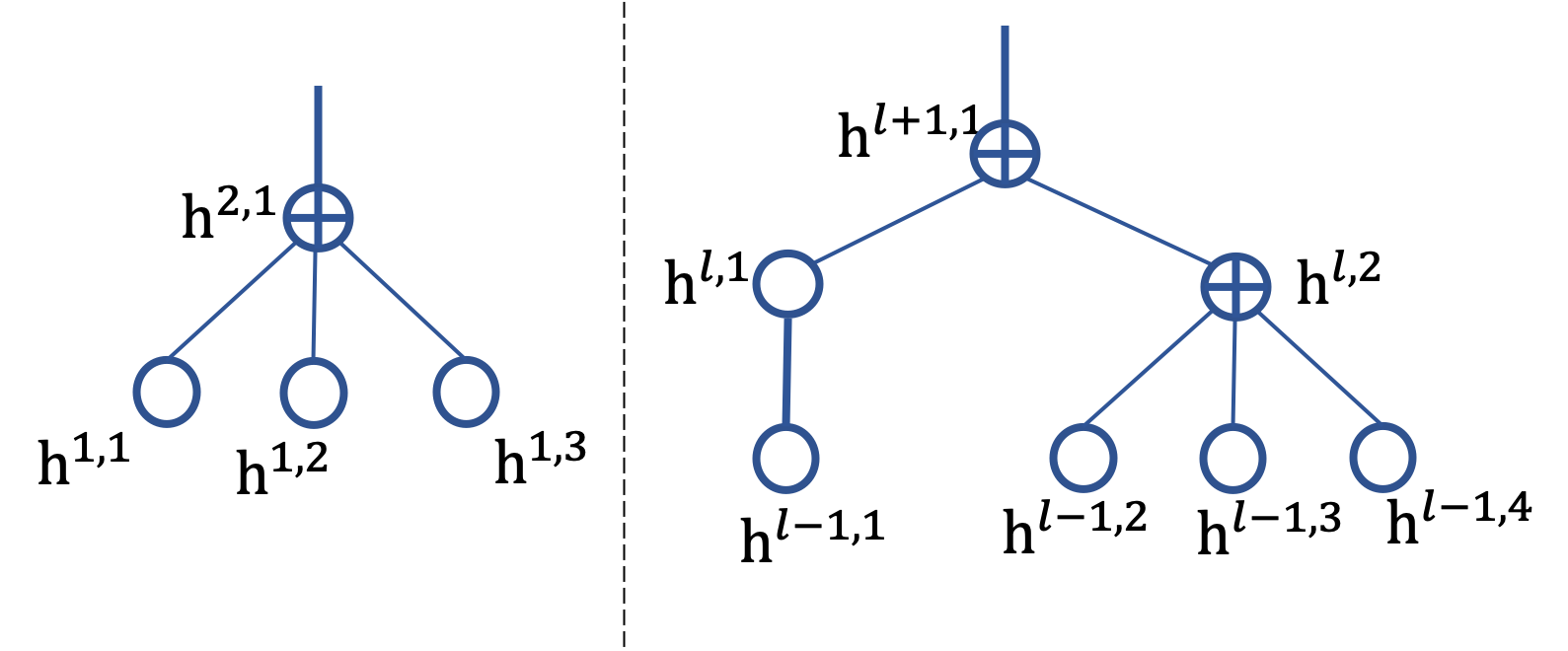}
\end{center}

\vspace{-0.35in}

\caption{(Left)  Node $\mathbf{h}^{2, 1}$ connects its children with invertible functions. Messages from the children are aggregated at the parent node, $\mathbf{h}^{2,1}$. (Right) An illustration of the latent structure from layer $l-1$ to $l+1$.  Thin lines are identity functions, and thick lines are flow functions.   $\oplus$ is an aggregation node, and circles stand for non-aggregation~nodes.}
\label{fig:tree}\vspace{-0.1in}
\end{figure}

\subsection{Evidence Lower Bound of VFGs}

We apply variational inference to learn model parameters  $\theta$ from data samples. Different from VAEs, the recognition model~(encoder) and the generative model~(decoder) in a VFG share the same  neural net structure and parameters. Moreover, the latent variables in a VFG lie in a hierarchy structure and  are generated with deterministic flow functions.

\begin{figure}[t]

\begin{center}
 \includegraphics[width=3.2in]{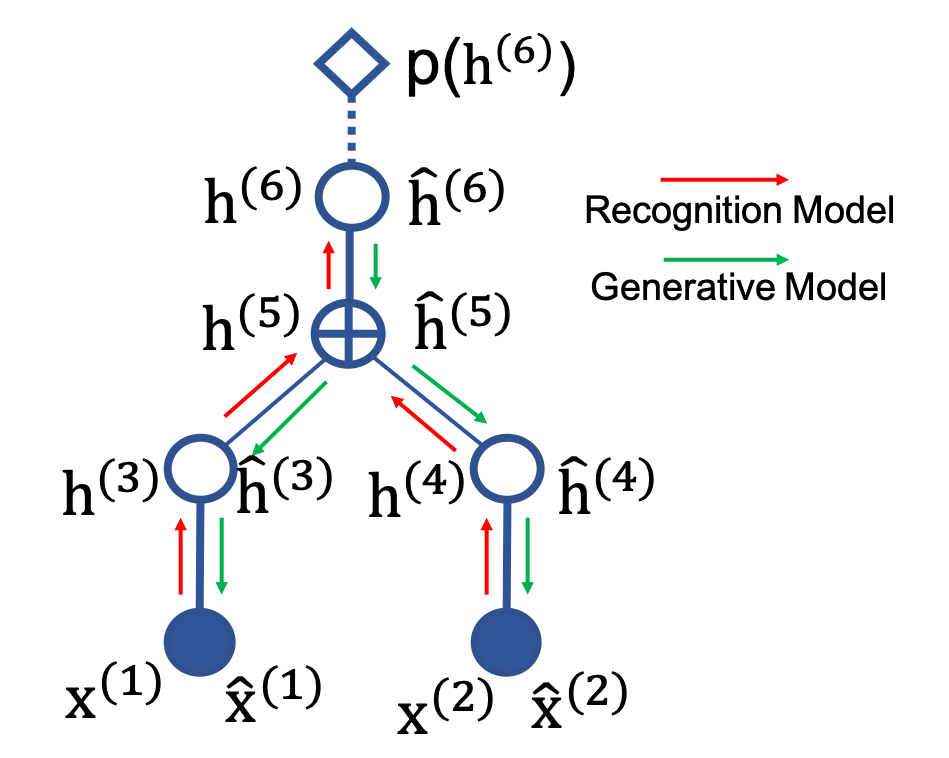}
\end{center}

\vspace{-0.3in}

\caption{Forward message from  data to approximate posterior distributions; generative model is realized by backward message from the root and generates the samples or reconstructions~at~each~layer.}
\label{fig:tree_message}
\end{figure}

We start with a tree VFG (Figure~\ref{fig:tree_message}) to introduce the ELBO of the model. The hierarchical tree structure comprises $L$ layers, $\mathbf{h}^l$ denotes the latent state in layer $l$ of the tree. We use  $\mathbf{h}^{(j)}$ to represent node $j$'s latent state without specification of the layer number, and $j$ is the node index in a tree or graph.
The joint distribution for the hierarchical model is then
\begin{align}\notag
p_{\theta}(\mathbf{x}, \mathbf{h}) = p( \mathbf{h}^{L}) p(\mathbf{h}^{L-1} | \mathbf{h}^{L}) \cdot \cdot  \cdot p(\mathbf{h}^{1} | \mathbf{h}^{2})  p(\mathbf{x} | \mathbf{h}^{1}) \, .
\end{align}
where $\mathbf{h}=\{\mathbf{h}^1, \cdots, \mathbf{h}^L \}$ denotes the set of latent states of the model. The hierarchical generative model is given by factorization $p(\mathbf{x}|\mathbf{h}^L) = p(\mathbf{x} | \mathbf{h}^{1}) \mathbf{\Pi}_{l=1}^{L-1}p(\mathbf{h}^{l} | \mathbf{h}^{l+1})  $, and the prior distribution is $p(\mathbf{h}^L)$. Note that only the \emph{root nodes} have \emph{prior distributions}.
The  probabilistic  density function $p(\mathbf{h}^{l-1} | \mathbf{h}^{l})$ in the generative  model is parameterized with one or multiple invertible  flow functions.  By leveraging the invertible  flow functions, we use variational inference  to approximate the posterior distribution of latent states.
The hierarchical posterior~(recognition model) is factorized as
\begin{align}\label{eq:posterior}
q_{\theta}(\mathbf{h}| \mathbf{x}) =  q(\mathbf{h}^1 | \mathbf{x})  q(\mathbf{h}^2 | \mathbf{h}^1) \cdot \cdot  \cdot  q(\mathbf{h}^{L} | \mathbf{h}^{L-1}).
\end{align}
Evaluation of the posterior (recognition model)~\eqref{eq:posterior} involves forward information flows from the bottom of the tree to the top, and similarly, sampling  the generative model takes the reverse direction.

By leveraging the hierarchical conditional independence in both  generative model and  posterior,  the ELBO regarding the model is
\begin{align} \label{eq:elbo}
&\log p_{\theta}(\mathbf{x})
    \geqslant \mathcal{L}(\mathbf{x}; \theta)  = \mathbb{E}_{q(\mathbf{h}^{1:L}|\mathbf{x})}\big[ \log p(\mathbf{x}|\mathbf{h}^{1:L})  \big] - \sum_{l=1}^{L} \mathbf{KL}^l.
\end{align}
Here $\mathbf{KL}^l$ is the Kullback-Leibler divergence between the posterior and generative model in layer $l$. The first term in~(\ref{eq:elbo}) evaluates data reconstruction.
When $1\leqslant l \leqslant L$,
\begin{align}\label{eq:kl}
\mathbf{KL}^l
=\mathbb{E}_{q(\mathbf{h}^{1:L}|\mathbf{x})}\big[  \log q(\mathbf{h}^{l}|\mathbf{h}^{l-1})   - \log p(\mathbf{h}^{l}|\mathbf{h}^{l+1}) \big].
\end{align}
When $l=L$,
$\mathbf{KL}^L =  \mathbb{E}_{q(\mathbf{h}^{1:L}|\mathbf{x})}\big[  \log q(\mathbf{h}^{L}|\mathbf{h}^{L-1})- \log p(\mathbf{h}^{L})  \big].$ It is easy to extend the computation of the ELBO~(\ref{eq:elbo}) to DAGs with topology ordering of the nodes (and thus of the layers).
Let $ch(i)$ and $pa(i)$ denote node $i$'s child set and parent set, respectively.
Then, the ELBO for a DAG structure reads:
\begin{align}\label{eq:elbo_dag}
\mathcal{L}(\mathbf{x}; \theta) =& \mathbb{E}_{q(\mathbf{h}|\mathbf{x})}\big[ \log p(\mathbf{x}|\mathbf{h})  \big] -  \sum_{i \in \mathcal{V}  \setminus  \mathcal{R}_{ \mathbb{G} }} \textbf{\text{KL}}^{(i)}  -    \sum_{i \in  \mathcal{R}_{ \mathbb{G} }  }  \textbf{\text{KL}}\big(q(\mathbf{h}^{(i)} | \mathbf{h}^{ch(i)} )   || p(\mathbf{h}^{(i)})  \big) .
\end{align}
Here $\mathbf{KL}^{(i)}=\mathbb{E}_{q(\mathbf{h}|\mathbf{x})}\big[  \log q(\mathbf{h}^{(i)}|\mathbf{h}^{ch(i)})   - \log p(\mathbf{h}^{(i)}|\mathbf{h}^{pa(i)}) \big]$.  $\mathcal{R}_{ \mathbb{G}}$ is the set of root  nodes of DAG $\mathbb{G} = \{\mathcal{V}, \mathbf{f}\}$. Assuming there are $k$ leaf nodes on a tree or a DAG model, corresponding to $k$ sections of the input sample $\mathbf{x} = [\mathbf{x}^{(1)}, ..., \mathbf{x}^{(k)}]$.

Maximizing the ELBO~\eqref{eq:elbo} or~\eqref{eq:elbo_dag} equals to  optimizing the parameters of the flows, $\theta$.  Similar to VAEs, we apply forward message passing~(encoding) to approximate the posterior  distribution of each layer's latent variables, and backward message passing~(decoding) to  generate the reconstructions as shown in Figure~\ref{fig:tree_message}.
For the following sections, we use $\mathbf{h}^{i}$ to represent  node $i$'s state in the forward message, and $\widehat{\mathbf{h}}^{i}$ for  node $i$'s state in the backward message. For all nodes, both $\mathbf{h}^{i}$ and $\widehat{\mathbf{h}}^{i}$  are sampled from the posterior. At the rood nodes, we have  $\widehat{\mathbf{h}}^{\mathcal{R}}=\mathbf{h}^{\mathcal{R}}$ .

\subsection{ Aggregation Nodes}\label{sec:node_aggr}

 There are two approaches to aggregate signals from different nodes: average-based and concatenation-based. We rather focus on average-based aggregation in this paper,  and Figure~\ref{fig:dag_aggr} gives  an example denoted by the operator $\oplus$.
Let $\mathbf{f}_{(i, j)}$ be the direct edge~(function) from node $i$ to node $j$, and $\mathbf{f}^{-1}_{ (i, j)}$ or  $\mathbf{f}_{ (j, i)}$ defined as its inverse function. Then, the aggregation operation at  node $i$ reads
 \begin{align}~\label{eq:aggr_node}
&  \mathbf{h}^{(i)} = \frac{1}{|ch(i)|} \sum_{j \in ch(i) } \mathbf{f}_{(j,i)}(\mathbf{h}^{(j)}) , \quad  \widehat{\mathbf{h}}^{(i)} = \frac{1}{|pa(i)|} \sum_{j \in pa(i) } \mathbf{f}_{ (j,i)}(\widehat{\mathbf{h}}^{(j)}) \, .
\end{align}
Note that the above two equations hold even when node $i$ has only one child or parent.

\begin{figure}[h!]
\begin{center}
 \includegraphics[width=1.5in]{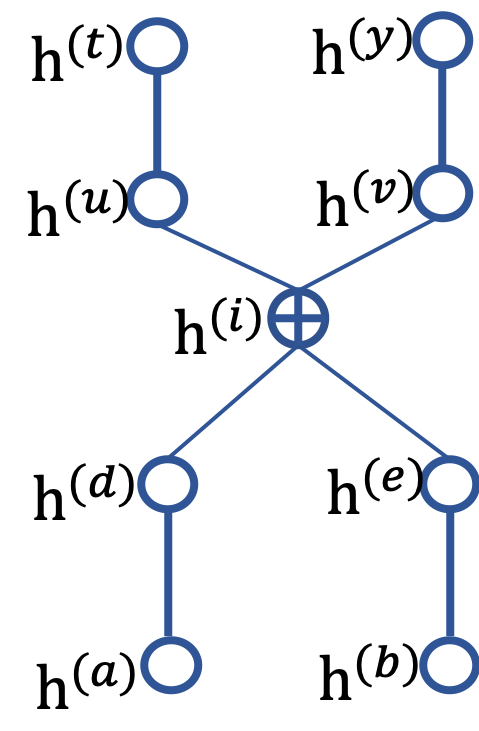}
\end{center}

\vspace{-0.1in}

\caption{Aggregation node on a DAG VFG.}
\label{fig:dag_aggr}
\end{figure}

With the identity function between the  parent and its children, there are \emph{node consistency rules} regarding an average aggregation node: {\it(a)} a  parent node's backward state  equals  the  mean of its children's forward states, i.e., $\widehat{\mathbf{h}}^{(i)} = \frac{1}{|ch(i)|} \sum_{j \in ch(i)} \mathbf{h}^{(j)}$; {\it(b)} a  child node's forward state equals to the average of its parents' backward states, i.e., $\mathbf{h}^{(i)} = \frac{1}{|pa(i)|} \sum_{j \in pa(i)} \widehat{\mathbf{h}}^{(j)} $. These rules empower VFGs with implicit invertibility.

We use aggregation node $i$ in the DAG presented in Figure~\ref{fig:dag_aggr} as an example to illustrate node consistency. Node $i$ has two parents, $u$ and $v$; and two children, $d$ and $e$. Node $i$  connects its parents and children with identity functions. According to~(\ref{eq:aggr_node}), we have $\mathbf{h}^{(i)} = (\mathbf{h}^{(d)}+\mathbf{h}^{(e)})/2$  and $\widehat{\mathbf{h}}^{(i)} = (\widehat{\mathbf{h}}^{(u)}+\widehat{\mathbf{h}}^{(v)})/2$.
Here aggregation \emph{consistency} means, for $i$'s children, their forward state should be consistent with $i$'s backward state, i.e.,
\begin{align}\label{eq:i_child}
\mathbf{h}^{(d)} = \mathbf{h}^{(e)} = \widehat{\mathbf{h}}^{(i)} .
\end{align}
For $i$'s parents, their backward state should be consistent with $i$'s forward state, i.e.,
\begin{align}\label{eq:i_parent}
\widehat{\mathbf{h}}^{(u)}  = \widehat{\mathbf{h}}^{(v)}  = \mathbf{h}^{(i)} .
\end{align}
We utilize the $\mathbf{KL}$ term in the ELBO~\eqref{eq:elbo_dag} to ensure~(\ref{eq:i_child}) and~(\ref{eq:i_parent}) can be satisfied during parameter updating. The $\mathbf{KL}$ term regarding~node~$i$~is
\begin{align}\label{eq:kl_dag}
\textbf{\text{KL}}^{(i)} = & \mathbb{E}_{q(\mathbf{h}, \widehat{\mathbf{h}}|\mathbf{x})}\big[  \log q(\mathbf{h}^{(i)}|\mathbf{h}^{ch(i)})  - \log p(\mathbf{h}^{(i)}|\widehat{\mathbf{h}}^{pa(i)}) \big]  \\ \notag
\simeq  & \log q(\mathbf{h}^{(i)}|\mathbf{h}^{ch(i)})  - \log p(\mathbf{h}^{(i)}|\widehat{\mathbf{h}}^{pa(i)}).
\end{align} 
As the term $\log q(\mathbf{h}^{(i)}|\mathbf{h}^{ch(i)})$  involves node states that are deterministic according to~\eqref{eq:aggr_node}, it is omitted in the computation of~\eqref{eq:kl_dag}. With Laplace as the latent state distribution, here
\begin{align} \notag
&\log p(\mathbf{h}^{(i)}|\widehat{\mathbf{h}}^{pa(i)}) \\ \notag
= &\frac{1}{2}\big(\log p(\mathbf{h}^{(i)}|\widehat{\mathbf{h}}^{(u)}) + p(\mathbf{h}^{(i)}|\widehat{\mathbf{h}}^{(v)})\big)\\ \notag
=& \frac{1}{2}\big(-\|\mathbf{h}^{(i)}- \widehat{\mathbf{h}}^{(u)}\|_1 -\|\mathbf{h}^{(i)}- \widehat{\mathbf{h}}^{(v)}\|_1-2m\cdot\log2 \big).
\end{align}
Hence minimizing $\textbf{\text{KL}}^{(i)}$ is equal to minimizing $\{\|\mathbf{h}^{(i)}- \widehat{\mathbf{h}}^{(u)}\|_1 + \|\mathbf{h}^{(i)}- \widehat{\mathbf{h}}^{(v)}\|_1 \}$ which achieves the consistent objective in~\eqref{eq:i_parent}.

Similarly,  $\textbf{\text{KL}}$s  of $i$'s children intend to realize consistency given in~\eqref{eq:i_child}. We use node $d$ as an example.  The $\textbf{\text{KL}}$ term regarding node $d$ is
\begin{align*}
\textbf{\text{KL}}^{(d)} = & \mathbb{E}_{q(\mathbf{h}, \widehat{\mathbf{h}}|\mathbf{x})}\big[  \log q(\mathbf{h}^{(d)}|\mathbf{h}^{ch(d)})  - \log p(\mathbf{h}^{(d)}|\widehat{\mathbf{h}}^{pa(d)}) \big] \\
\simeq  & \log q(\mathbf{h}^{(d)}|\mathbf{h}^{ch(d)})  - \log p(\mathbf{h}^{(d)}|\widehat{\mathbf{h}}^{pa(d)}).
\end{align*}
The first term $\log q(\mathbf{h}^{(d)}|\mathbf{h}^{ch(d)})$ is omitted in the calculation of  $\textbf{\text{KL}}^{(d)}$ due to the deterministic relation with~\eqref{eq:aggr_node}. Knowing that
\begin{align*}
 \log p(\mathbf{h}^{(d)}|\widehat{\mathbf{h}}^{pa(d)})= & \log p(\mathbf{h}^{(d)}|\widehat{\mathbf{h}}^{(i)}) \\
 = & -\|\mathbf{h}^{(d)}- \widehat{\mathbf{h}}^{(i)}\|_1 - m\cdot\log2,
\end{align*} 
we notice that minimizing $\textbf{\text{KL}}^{(d)}$ boils down to minimizing $\|\mathbf{h}^{(d)}- \widehat{\mathbf{h}}^{(i)}\|_1$ that targets at \eqref{eq:i_child}.
In summary, by maximizing the ELBO of a VFG, the aggregation consistency can be  attained along with fitting the model to the data.

\subsection{Implementation Details}

The calculation of the data reconstruction term in~\eqref{eq:elbo_dag} requires  node states $\mathbf{h}^{i}$ and $\widehat{\mathbf{h}}^{i}$ ($\forall i \in \mathcal{V}$) from the posterior. They correspond to the encoding and decoding procedures in VAE model as shown in Eq.~(\ref{eq:vae_recon}). At the root node,  we have $\widehat{\mathbf{h}}^{\mathcal{R}}=\mathbf{h}^{\mathcal{R}} $.  The reconstruction terms in ELBO~(\ref{eq:elbo_dag}) can be computed with the backward message in the generative model $p(\mathbf{x}| \widehat{\mathbf{h}}^{1})$, i.e.,
\begin{align*}
&\mathbb{E}_{q(\mathbf{h}, \widehat{\mathbf{h}}|\mathbf{x})}\big[ \log p(\mathbf{x}|\mathbf{h}, \widehat{\mathbf{h}})\big]
\simeq  \frac{1}{U}\sum_{u=1}^U \log p(\mathbf{x}| \widehat{\mathbf{h}}^{1:L}_u)
= \frac{1}{U}\sum_{u=1}^U \log p(\mathbf{x}| \widehat{\mathbf{h}}^{pa(x)}_u) .
 \end{align*}
For a VFG model, we set $U=1$. In the last term,  $p(\mathbf{x}| \widehat{\mathbf{h}}^{pa(x)})$ is either Gaussian or binary distribution parameterized with $\widehat{\mathbf{x}}$ generated via the flow function with $\widehat{\mathbf{h}}^{pa(x)}$ as the input.

\section{Universal Approximation Property}\label{sec:approx}
A universal approximation power of coupling-layer based flows has been highlighted in~\cite{teshima2020coupling}.
Following the analysis for flows~\cite{teshima2020coupling}, we  prove that coupling-layer based VFGs have universal approximation as well.   We first  give several additional definitions regarding universal approximation.
For a measurable mapping $\mathbf{f}: \mathbb{R}^m \rightarrow  \mathbb{R}^n$ and a subset $K \subset   \mathbb{R}^m $, we define the following,
\begin{align*}
|| \mathbf{f}||_{p,K}= \bigg(\int_{K} ||f(x)||^p dx \bigg)^{1/p}.
\end{align*}
Here $||\cdot||$ is the Euclidean norm of $\mathbb{R}^n$ and $||\mathbf{f}||_{\text{sup},K} := \text{sup}_{x\in K} || \mathbf{f}(x)||$.

\begin{definition}
($L^p$-/sup-universality) Let $\mathcal{M}$ be a model which is a set of measurable mappings from $\mathbb{R}^m$ to $\mathbb{R}^n$. Let $p\in [1, \infty)$, and let $\mathcal{G}$ be  a set of measurable mappings $\mathbf{g}: U_{\mathbf{g}} \rightarrow \mathcal{R}^n$, where $U_{\mathbf{g}}$ is a measurable subset of $\mathbb{R}^m$ which may depend on $\mathbf{g}$. We say that $\mathcal{M}$ has the $L^p$-universal approximation property for $\mathcal{G}$ if for any $\mathbf{g}\in \mathcal{G}$, any $\epsilon > 0$, and any compact subset $K \in U_{\mathbf{g}}$, there exists  $\mathbf{f} \in \mathcal{M}$ such that $|| \mathbf{f}-\mathbf{g}||_{p, K} < \epsilon$. We define the sup-universality analogously by replacing  $|| \cdot ||_{p, K}||$ with $|| \cdot ||_{p, K}||_{sup, K}$ .
\end{definition}

\begin{definition}
(Immersion and submanifold) $\mathbf{g}:\mathfrak{M}  \rightarrow \mathfrak{N} $ is said to be an immersion if rank($\mathbf{g}$)$=m=$dim($\mathfrak{M}$) everywhere. If $\mathbf{g}$ is injective~(one-to-one) immersion, then $\mathbf{g}$ establish an one-to-one correspondence of  $\mathfrak{M}$ and the subset $\Tilde{\mathfrak{M}} = \mathbf{g}(\mathfrak{M})$ of $\mathfrak{N}$. If we use this correspondence to endow $\Tilde{\mathfrak{M}}$ with a topology and $\mathcal{C}^{\infty}$ structure, then $\Tilde{\mathfrak{M}}$ will be called a submanifold (or immersed submanifold) and $\mathbf{g}:\mathfrak{M} \rightarrow \Tilde{\mathfrak{M}}$ is a diffeomorphism.
\end{definition}

\begin{definition}
($\mathcal{C}^r$-diffeomorphisms for submanifold: $\mathcal{Q}^r$). We define $\mathcal{Q}^r$  as the set of all $\mathcal{C}^r$-diffeomorphisms $\mathbf{g}: U_\mathbf{g} \rightarrow \mathfrak{U} $, where $U_\mathbf{g} \subset \mathbb{R}^m$ is an open set $\mathcal{C}^r$-diffeomorphic to $\mathfrak{U}$, which may depend on $\mathbf{g}$, and $\mathfrak{U}$ is a submanifold of $\mathbb{R}^n$.
\end{definition}

We use $m$ to represent the root node dimension of a VFG, and $n$ to denote the dimension of data samples. VFGs learn the data manifold embedded in $\mathbb{R}^n$. We define $\mathcal{C}_c^{\infty}(\mathbb{R}^{m-1})$ as the set of all compactly-supported $\mathcal{C}^{\infty}$ mappings from $\mathbb{R}^{m-1}$ to $\mathbb{R}$. For a function set $\mathcal{T}$, we define $\mathcal{T}$-ACF as the set of affine coupling flows~\cite{teshima2020coupling} that  are assembled  with functions in $\mathcal{T}$, and  we use VFG$_{\mathcal{T}-ACF}$ to represent the set of VFGs constructed using flows in $\mathcal{T}$-ACF.

\vspace{0.1in}

\begin{theorem} \label{thm:lp_univ}
($L^p$-universality) Let $p \in [0, \infty)$ . Assume $\mathcal{H}$ is a sup-universal approximator for $\mathcal{C}_c^{\infty}(\mathbb{R}^{m-1})$, and that it consists of $\mathcal{C}^1$-functions. Then  VFG$_{\mathcal{H}-ACF}$ is an $L^p$-universal approximator~for~$\mathcal{Q}_c^0$.
\end{theorem}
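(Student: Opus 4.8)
The plan is to deduce the VFG universality from the coupling-flow universality theorem of \citet{teshima2020coupling} by showing that every generative map realizable by a VFG built from $\mathcal{H}$-ACF flows can be written as an affine-coupling diffeomorphism of the ambient space $\mathbb{R}^n$ composed with a fixed embedding of the root space $\mathbb{R}^m$. Reading the backward (generative) direction of a VFG, one starts from a root state $\mathbf{z} \in \mathbb{R}^m$, the aggregation nodes broadcast and recombine this state through their consistency rules $\widehat{\mathbf{h}}^{(d)} = \widehat{\mathbf{h}}^{(e)} = \widehat{\mathbf{h}}^{(i)}$, and the edges apply flows drawn from $\mathcal{H}$-ACF, finally emitting a data point in $\mathbb{R}^n$. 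I would first record the structural identity that this composite map equals $\mathbf{\Phi} \circ \iota$, where $\iota: \mathbb{R}^m \to \mathbb{R}^n$ is the fixed embedding induced by the broadcasting (its image is an $m$-dimensional linear subspace of the leaf space $\mathbb{R}^n$) and $\mathbf{\Phi}: \mathbb{R}^n \to \mathbb{R}^n$ is a composition of affine coupling flows assembled from functions of $\mathcal{H}$.

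Fixing a target $\mathbf{g} \in \mathcal{Q}_c^0$ mapping an open set $U_\mathbf{g} \subset \mathbb{R}^m$ onto a submanifold $\mathfrak{U} \subset \mathbb{R}^n$, together with a compact $K \subset U_\mathbf{g}$ and a tolerance $\epsilon > 0$, I would then factor the target in the same form. By the submanifold straightening used in \citet{teshima2020coupling}, on the compact set $K$ there is an ambient $\mathcal{C}^1$-diffeomorphism $\mathbf{D}: \mathbb{R}^n \to \mathbb{R}^n$ with $\mathbf{g} = \mathbf{D} \circ \iota$ on $K$; that is, $\mathbf{D}$ flattens $\mathfrak{U}$ onto the linear slice $\iota(\mathbb{R}^m)$. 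Since $\mathcal{H}$ is a sup-universal approximator for $\mathcal{C}_c^{\infty}(\mathbb{R}^{m-1})$ consisting of $\mathcal{C}^1$ functions, the coupling-flow universality theorem supplies an $\mathcal{H}$-ACF composition $\mathbf{\Phi}$ with $\|\mathbf{\Phi} - \mathbf{D}\|_{p, K'} < \delta$ on a compact neighborhood $K'$ of $\iota(K)$. Precomposing with the fixed, Lipschitz embedding $\iota$ and choosing $\delta$ small relative to the Lipschitz bound then gives $\|\mathbf{\Phi} \circ \iota - \mathbf{g}\|_{p, K} < \epsilon$, and by the structural identity of the first step $\mathbf{\Phi} \circ \iota$ is the generative map of a VFG in VFG$_{\mathcal{H}-ACF}$, which is exactly the claim.

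The hard part will be the realizability step, namely verifying that the graphical constraints of a VFG do not shrink the set of attainable generative maps below $\{\mathbf{\Phi} \circ \iota\}$: I must check that identity-edge aggregation together with free $\mathcal{H}$-ACF flows on the remaining edges can simultaneously produce the dimension-lifting embedding $\iota$ and an arbitrary ambient coupling-flow composition $\mathbf{\Phi}$, despite the encoder/decoder parameter sharing (so that realizing $\mathbf{\Phi}$ in the backward direction amounts to realizing $\mathbf{\Phi}^{-1}$ in the forward direction, which stays within $\mathcal{H}$-ACF) and the averaging at aggregation nodes. The secondary difficulty is transporting the $L^p$ estimate across the dimension lift: Teshima's guarantee is stated for ambient diffeomorphisms measured on compacts of $\mathbb{R}^n$, whereas the target error is measured over $K \subset \mathbb{R}^m$, so I must confine the approximation to a neighborhood of the slice $\iota(K)$ and bound the layerwise Lipschitz constants so that the error does not blow up under composition. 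Once these two points are settled, the measurability and compactness bookkeeping needed to match the definition of $L^p$-universality is routine.
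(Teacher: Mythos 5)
Your strategy is genuinely different from the paper's, but it founders on exactly the step you flag as ``the hard part,'' and that step is not merely hard --- it is false as stated. In a connected VFG every node state must have the \emph{same} dimension $m$: edges are invertible flows, hence dimension-preserving, and the aggregation rule~\eqref{eq:aggr_node} sums the transformed child states, which forces all nodes incident to an aggregation node to share its dimension. Consequently the generative (backward) map of any VFG in VFG$_{\mathcal{H}\text{-ACF}}$ has the block form $\mathbf{z} \mapsto \big(F_1(\mathbf{z}), \ldots, F_\tau(\mathbf{z})\big)$, where each $F_i: \mathbb{R}^m \to \mathbb{R}^m$ is a composition (for trees) or an average of compositions (for DAGs) of $m$-dimensional flows; in a tree each $F_i$ is itself an $m$-dimensional diffeomorphism. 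By contrast, an ambient affine coupling layer on $\mathbb{R}^n$ couples coordinates \emph{across} different leaf blocks through its conditioner, so the block components of $\mathbf{\Phi} \circ \iota$ (e.g.\ $\mathbf{z} \mapsto \mathbf{z} \odot e^{s(\mathbf{z})} + t(\mathbf{z})$ for a layer that transforms block $2$ conditioned on block $1$) are in general non-invertible maps of $\mathbb{R}^m$ and cannot be produced by any wiring of $m$-dimensional flows and identity-edge averaging. So the specific $\mathbf{\Phi}$ that Teshima's theorem hands you in dimension $n$ yields a composite $\mathbf{\Phi} \circ \iota$ that lies \emph{outside} the model class, and no amount of bookkeeping on Lipschitz constants or compact neighborhoods repairs this: the structural identity of your first step holds at best as a one-way containment (VFG maps embed into $\{\mathbf{\Phi}\circ\iota\}$), whereas your argument needs the reverse containment.

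The paper's proof avoids the ambient space entirely and never invokes an $n$-dimensional flow. It decomposes the \emph{target} $\mathbf{g} = [\mathbf{g}_1, \ldots, \mathbf{g}_\tau]^{\top}$ into $\tau$ output blocks of dimension $m$ (with an overlapping-sections trick when $m \nmid n$), treats each block $\mathbf{g}_i$ as a diffeomorphism onto a submanifold $\tilde{\mathfrak{M}}_i = \mathbf{g}_i(\mathfrak{M})$, approximates each block separately by an $m$-dimensional $\mathcal{H}$-ACF flow via Theorem~2 of \citet{teshima2020coupling}, and wires these flows as the children of a single aggregation node in a one-layer tree VFG --- so the approximand and the realizable class have the same block structure by construction, which is precisely the alignment your route lacks. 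A secondary concern with your argument, independent of realizability: the flattening factorization $\mathbf{g} = \mathbf{D} \circ \iota$ with an ambient homeomorphism $\mathbf{D}$ of $\mathbb{R}^n$ is not available for arbitrary elements of $\mathcal{Q}_c^0$, since merely continuous embeddings can be wild (no ambient homeomorphism straightens them onto a linear slice), so even your second step would need regularity beyond what the theorem hypothesizes. If you want to salvage your approach, you would have to restrict $\mathbf{\Phi}$ to block-diagonal coupling compositions --- at which point it degenerates into the paper's blockwise argument.
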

\begin{proof}
We  construct a VFG structure that forms a mapping from $\mathbb{R}^m$ to $\mathbb{R}^n$.  Let $r=n\mod m$.

If  $r =0$, it is easy to construct a one-layer tree VFG $\mathbf{f}$ ($\mathbf{f}$ also represents the function/edge set) and the root as an aggregation node. The children divide the $n$ input entries into $\tau = n/m$ even sections, and each section  connects the aggregation node with a flow function.

Given an injective immersion $\mathbf{g}:\mathfrak{M}  \rightarrow \mathfrak{N}$, function $\mathbf{g}$ can be represented with the concatenation of a set of functions, i.e.,  $\mathbf{g}=[\mathbf{g}_1, ...,  \mathbf{g}_{\tau}]^{\top}$, each invertible $\mathbf{g}_i$ has dimension $m$. According to the function decomposition theory~\cite{kuo2010decompositions}, its inverse  can be represent as the summation of  functions $\mathbf{g}^{-1}_{i}, 1\leq i \leq \tau$, i.e., $\mathbf{g}^{-1} = \frac{1}{\tau} \sum_{i=1}^{\tau} \mathbf{g}^{-1}_{i}$. For each $\mathbf{g}_{i}$, and $\tilde{\mathfrak{M}}_{i} = \mathbf{g}_{i}(\mathfrak{M})$ is a submanifold in $\mathfrak{N}$, and it is diffeomorphic to $\mathfrak{M}$. According to  Theorem 2 in~\cite{teshima2020coupling}, $\mathcal{H}-ACF$ is  an  universal approximater  for each $\mathbf{g}_{i}$, $1\leq i \leq \tau$. Therefore,  VFG $\mathbf{f}$ has  universal approximation for immersion $\mathbf{g}:\mathfrak{M}  \rightarrow \mathfrak{N}$.

If $r \neq 0$, let $\tau = \lfloor n/m \rfloor$. We  divide the  $\tau$-th section and the remaining $r$ entries into two equal small sections that are denoted with $\tau$ and $\tau +1$. Sections $\tau$ and $\tau +1$ have  $r$  overlapped entries. Similarly, we can construct an one-layer VFG $\mathbf{f}$ with $\tau +1$ children,  and  each child takes a section as the input.

The input coordinate index of $\mathbf{g}_{\tau}$ in $\mathbb{R}^m$ is $I_{\tau} = \big[1,2,..., \lceil (m+r)/2 \rceil \big]$, and the output index of $\mathbf{g}_{\tau}$  in $\mathbb{R}^n$  is $I_{\tau} + \gamma = \big[\gamma + 1, \gamma + 2,...,  \gamma + \lceil (m+r)/2 \rceil \big]$, and $\gamma = (\tau-1)m$. The input coordinate index of $\mathbf{g}_{\tau+1}$ in $\mathbb{R}^m$ is $I_{\tau + 1} = \big[m- \lceil (m+r)/2 \rceil + 1, ..., m-1, m \big]$, and the output index of $\mathbf{g}_{\tau +1}$  in $\mathbb{R}^n$  is $I_{\tau+1} + \gamma $.   We can see that the m dimensions are divided into two sets, the overlapped set $O = \big[ m- \lceil (m+r)/2 \rceil + 1, \lceil (m+r)/2 \rceil  \big]$, and the remaining set $R$ containing the rest dimensions.

The mapping $\mathbf{g}:\mathfrak{M}  \rightarrow \mathfrak{N}$ can be decomposed into $\tau + 1$ functions, i.e.,  $\mathbf{g}=[\mathbf{g}_1, ..., \mathbf{g}_{\tau}, \mathbf{g}_{\tau+1}]^{\top}$, and the inverse $\mathbf{g}^{-1}$ is adjusted here:  $\mathbf{g}_j^{-1} = \frac{1}{\omega} \sum_{i=1}^{\omega} \mathbf{g}^{-1}_{i(j)}$. When $j\in O$, $\omega = \tau +1$, and all $\mathbf{g}^{-1}_i$s will be involved; when $j\in R$, $\omega = \tau$, and either $\mathbf{g}^{-1}_{\tau}$  or $\mathbf{g}^{-1}_{\tau +1}$ is omitted due to the missing of  entry $j$ in the function output. The mapping  $\mathbf{g}_{\tau}$  is a diffeomorphism from manifold  $\mathfrak{M}_{\tau}$ ( $\mathfrak{M}_{\tau} \subset \mathfrak{M}$) to  sub-manifold $\tilde{\mathfrak{M}}_{\tau}$  in $\mathfrak{N}$. Similarly
$\mathbf{g}_{\tau+1}$ is a  diffeomorphism from  $\mathfrak{M}_{\tau+1}$  to  manifold $\tilde{\mathfrak{M}}_{\tau+1}$.  For each $\mathbf{g}_{i}$, $1\leq i \leq \tau +1$, it can be universally approximated with a function in $\mathcal{H}-ACF$~\cite{teshima2020coupling}. Hence, we construct a VFG with universal approximation~for~any~$\mathbf{g}$~in~$\mathcal{Q}_c^0$.
\end{proof}

\vspace{0.1in}

With the conditions in Theorem~\ref{thm:lp_univ},  VFG$_{\mathcal{H}-ACF}$ is a distributional universal approximator as well~\citep{teshima2020coupling}.

\section{The Proposed Algorithms}\label{sec:algrithm}

In this section, we develop the training algorithm (Algorithm~\ref{alg:main}) to  maximize  the ELBO objective function \eqref{eq:elbo_dag}.
In Algorithm~\ref{alg:main}, the inference of the latent states is performed via forwarding message passing, cf. Line~6, and their reconstructions are computed in backward message passing, cf. Line~11.
A VFG is a deterministic network passing latent variable values between nodes. Ignoring explicit neural network parameterized variances for all latent nodes enables us to use flow-based models as both the encoders and decoders.
Hence, we obtain a deterministic ELBO objective~(\ref{eq:elbo})-~(\ref{eq:elbo_dag}) that can efficiently be optimized with standard stochastic optimizers.
\begin{algorithm}[h]
  \caption{Inference model parameters with  forward and backward message propagation}
   \label{alg:main}
\begin{algorithmic}[1]
   \STATE {\bfseries Input:} Data distribution $\mathcal{D}$,  $\mathbb{G} = \{\mathcal{V}, \mathbf{f}\}$
   \FOR {$s=0,1,...$}
   \STATE  Sample minibatch $b$ samples $\{\mathbf{x}_1, ..., \mathbf{x}_b \}$ from $\mathcal{D}$;
   \FOR{$i \in \mathcal{V}$}\label{line:for2}
    \STATE  \textcolor{blue}{// forward message passing}
   \STATE $\mathbf{h}^{(i)} = \frac{1}{|ch(i)|} \sum_{j \in ch(i) } \mathbf{f}_{(j,i)}(\mathbf{h}^{(j)})$; \label{line:forward}
    \ENDFOR
    \STATE $\widehat{\mathbf{h}}^{(i)} = \mathbf{h}^{(i)} \ \  \text{if} \ i \in \mathcal{R}_{\mathbb{G}} $ or $i \in$ layer L;
   \FOR{$i \in \mathcal{V}$}
   \STATE \textcolor{blue}{// backward message passing}
   \STATE $\widehat{\mathbf{h}}^{(i)} = \frac{1}{|pa(i)|} \sum_{j \in pa(i) } \mathbf{f}^{-1}_{ (i,j)}(\widehat{\mathbf{h}}^{(j)}) $;\label{line:backward}
   \ENDFOR
    \STATE  $\mathbf{h} =  \{\mathbf{h}^{(t)} \big |  t \in \mathcal{V} \}$, $\widehat{\mathbf{h}} =  \{\widehat{\mathbf{h}}^{(t)} \big | t \in \mathcal{V} \}$;
    \STATE Approximate the $\mathbf{KL}$ terms in ELBO for each layer with b samples;
    \STATE Updating VFG model $\mathbb{G}$ with gradient ascending: $\theta^{(s+1)}_{\mathbf{f}} = \theta^{(s)}_{\mathbf{f}} + \nabla_{\theta_{\mathbf{f}}}\frac{1}{b} \sum_{i=1}^b  \mathcal{L}(\mathbf{x}_b; \theta^{(s)}_{\mathbf{f}})   \, .$\label{line:update}
   \ENDFOR
\end{algorithmic}
\end{algorithm}

In training Algorithm~\ref{alg:main}, the backward variable state $\widehat{\mathbf{h}}^l$ in  layer $l$  is generated according to $p(\widehat{\mathbf{h}}^l | \widehat{\mathbf{h}}^{l+1})$, and at the root  layer,  node state $\widehat{\mathbf{h}}^{\mathcal{R}}$ is set  equal to  $\mathbf{h}^{\mathcal{R}}$ that is
from  the posterior $q(\mathbf{h}|\mathbf{x})$, not from the prior $p(\mathbf{h}^{\mathcal{R}})$. So we can see all the forward and backward latent variables are sampled from the posterior $q(\mathbf{h}|\mathbf{x})$.


\newpage

From a practical perspective, layer-wise training strategy can improve the accuracy of a model especially when it is constructed of more than two layers.
In such a case, the parameters of only one layer are updated with backpropagation of the gradient of the loss function while keeping the other layers fixed at each optimization step.
By maximizing the ELBO~\eqref{eq:elbo_dag} with the above algorithm, the node consistency rules in Section~\ref{sec:node_aggr} are expected to be satisfied.

\subsection{Improve Training of VFG}\label{sec:random_mask}

The inference ability of VFG  can be reinforced by masking out some sections of the training samples.
The training objective can be changed to force the model to impute the value of the masked sections.
For example in a tree model, the alternative objective function reads
\begin{align}  \label{eq:elbo_tree_mask}
 \mathcal{L}(\mathbf{x}, O_{\mathbf{x}}; \theta)
= & \sum_{t: 1\leqslant t \leqslant k, t\notin O}
 \mathbb{E}_{q(\mathbf{h}, \widehat{\mathbf{h}}|\mathbf{x}^{O_{\mathbf{x}}} )} \bigg[ \log p( \mathbf{x}^{(t)}|  \widehat{\mathbf{h}}^{1})   \bigg] \\ \notag
 &- \sum_{l=1}^{L-1}  \mathbb{E}_{q(\mathbf{h}, \widehat{\mathbf{h}}|\mathbf{x})} \bigg[ \log q(\mathbf{h}^{l}|\mathbf{h}^{l-1}) - \log p( \mathbf{h}^{l}|  \widehat{\mathbf{h}}^{l+1})   \bigg]
 \\ \notag
 &-  \textbf{\text{KL}}\big(q(\mathbf{h}^L | \mathbf{h}^{L-1} )   | p(\mathbf{h}^L)  \big).
\end{align}
where $O_{\mathbf{x}}$ is the index set of leaf nodes  with observation, and $\mathbf{x}^{O_{\mathbf{x}}}$ is the union of observed data sections.
The  random-masking training procedure for objective~\eqref{eq:elbo_tree_mask} is described in Algorithm~\ref{alg:rand_mask}.
In practice, we use Algorithm~\ref{alg:rand_mask} along with Algorithm 1 to enhance the training of a VFG model.  However, we only occasionally update the model parameter $\theta$ with the gradient of~\eqref{eq:elbo_tree_mask} to ensure the distribution learning running well.

\begin{algorithm}[b!]
   \caption{Inference model parameters with random masking}
   \label{alg:rand_mask}
\begin{algorithmic}[1]
   \STATE {\bfseries Input:} Data distribution $\mathcal{D}$,  $\mathbb{G} = \{\mathcal{V}, \mathbf{f}\}$
   \FOR {$s=0,1,...$}
   \STATE  Sample minibatch $b$ samples $\{\mathbf{x}_1, ..., \mathbf{x}_b \}$ from $\mathcal{D}$;
   \STATE
    Optimize~(\ref{eq:elbo}) with Line~4 to Line~15 in Algorithm~\ref{alg:main};
    \STATE  Sample a subset of the $k$ data sections as data observation set $O_{\mathbf{x}}$; $O \leftarrow O_{\mathbf{x}}$;
   \FOR{$i \in \mathcal{V}$}
    \STATE  \textcolor{blue}{// forward message passing}
   \STATE $\mathbf{h}^{(i)} = \frac{1}{|ch(i) \cap O |} \sum_{j \in ch(i) \cap O} \mathbf{f}_{(j,i)}(\mathbf{h}^{(j)})$;
     \STATE  $O \leftarrow O \cup \{i\}$ if $ch(i) \cap O \neq \emptyset $;
    \ENDFOR
    \STATE $\widehat{\mathbf{h}}^{(i)} = \mathbf{h}^{(i)} \ \  \text{if} \ i \in \mathcal{R}_{\mathbb{G}} $ or $i \in$ layer L;
   \FOR{$i \in \mathcal{V}$}
   \STATE \textcolor{blue}{// backward message passing}
   \STATE $\widehat{\mathbf{h}}^{(i)} = \frac{1}{|pa(i)|} \sum_{j \in pa(i) } \mathbf{f}^{-1}_{ (i,j)}(\widehat{\mathbf{h}}^{(j)}) $;
   \ENDFOR
    \STATE  $\mathbf{h} =  \{\mathbf{h}^{(t)} \big |  t \in \mathcal{V} \cap O \}$, $\widehat{\mathbf{h}} =  \{\widehat{\mathbf{h}}^{(t)} \big | t \in \mathcal{V} \}$;
    \STATE Approximate the $\mathbf{KL}$ terms in ELBO for each layer with b samples;
    \STATE Updating VFG with gradient of~(\ref{eq:elbo_tree_mask}): $\theta^{(s+1)}_{\mathbf{f}} = \theta^{(s)}_{\mathbf{f}} + \nabla_{\theta_{\mathbf{f}}}\frac{1}{b} \sum_{i=1}^b  \mathcal{L}(\mathbf{x}_b, O_{\mathbf{x}}; \theta^{(s)}_{\mathbf{f}})   \, ,$
   \ENDFOR
\end{algorithmic}
\end{algorithm}

\newpage

\begin{figure}[t]
\begin{center}
 \includegraphics[width=0.23\linewidth]{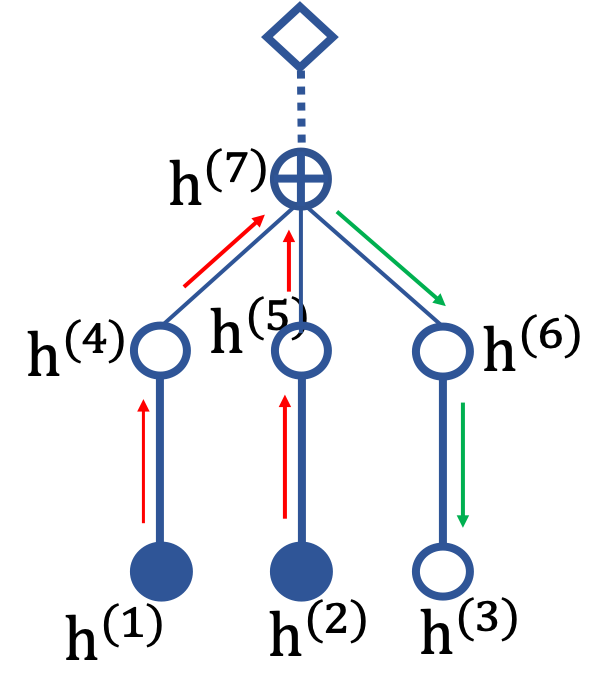}
 \hspace{0.4in}
 \includegraphics[width=0.49\linewidth]{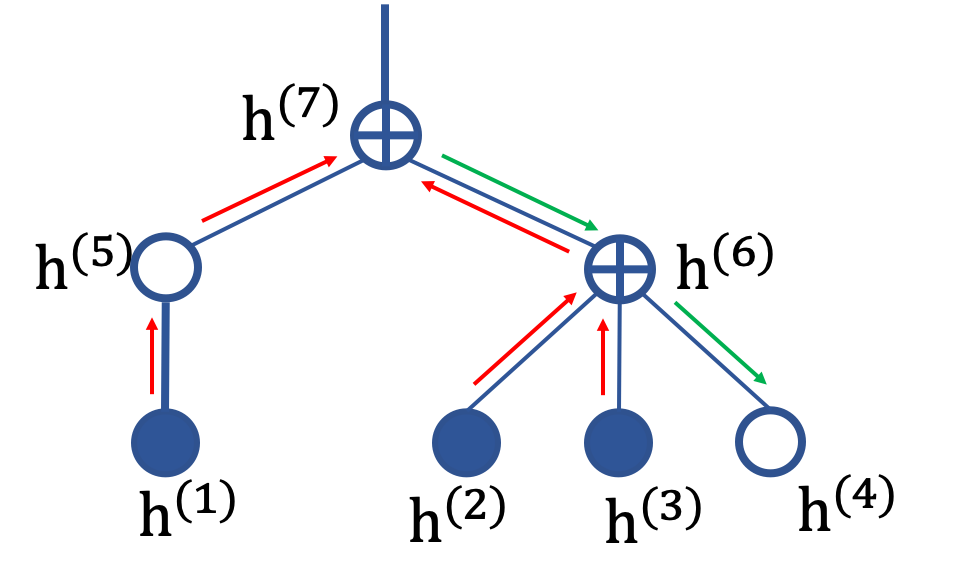}
\end{center}
\vspace{-0.1in}
 \caption{{(Left) Inference on a VFG with single aggregation node. Node 7 aggregates information from node 1 and 2, and  passes down the update to node 3 for prediction. (Right) Inference on a tree VFG. Observed node states are gathered at node 7 to predict the state of node 4. Red and green lines are forward and backward messages, respectively.}}
\label{fig:two_layer_infer}
\end{figure}

\section{Inference on VFGs }\label{sec:infer}

With a  VFG, we aim to infer node states given observed ones. The hidden state of a parent node $j$ in $l=1$ can be computed with the observed children as follows:
 \begin{align}\label{eq:aggr_obs_ch}
\mathbf{h}^{(j)}  = \frac{1}{|ch(j) \cap O|}\sum_{i \in ch(j) \cap O} \mathbf{h}^{(i)} \, ,
\end{align}
where $O$ is the set of observed leaf nodes, see Figure~\ref{fig:two_layer_infer}-left for an illustration.
Observe that for either a tree or a DAG, the state of any hidden node is updated via messages received from its children. After reaching the root node, we can update any nodes with backward message passing.  Figure~\ref{fig:two_layer_infer} illustrates this inference mechanism for trees in which the structure enables us to perform message passing among the nodes.
We derive the following lemma establishing the relation between two leaf nodes.

\begin{lemma}\label{lm:apprx}
Let $\mathbb{G}$ be a  tree VFG  with $L$ layers, and $i$ and $j$ are two leaf nodes with $a$ as the closest common ancestor node. Given observed value at node $i$, the value of node $j$ can be approximated by   $\widehat{\mathbf{x}}^{j} =  \mathbf{f}_{(a,j)}(\mathbf{f}_{(i, a)}(\mathbf{x}^{(i)}))$. Here $\mathbf{f}_{(i, a)}$ is the flow function path from node $i$ to node $a$.
\end{lemma}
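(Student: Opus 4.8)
The plan is to route information from the observed leaf $i$ to the target leaf $j$ through their lowest common ancestor $a$, exploiting that in a tree every node has a unique parent, so every pair of nodes is joined by a unique path. I would split the argument into an upward (forward) segment $i \to a$ and a downward (backward) segment $a \to j$, and then glue them at $a$ using the node consistency rules of Section~\ref{sec:node_aggr}. Throughout I treat the identity edges as the special case of flow edges announced in Section~\ref{sec:tech}, so every edge map is invertible and the same composition argument applies uniformly to aggregation and non-aggregation nodes on the path.

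First I would handle the forward segment. Write the unique ancestral path $i = v_0, v_1, \ldots, v_p = a$ with $v_{t+1} = pa(v_t)$. Since $i$ is the only observed leaf, each $v_{t+1}$ has exactly one observed child, namely $v_t$, so the masked forward aggregation rule reduces to a single application of the edge map, $\mathbf{h}^{(v_{t+1})} = \mathbf{f}_{(v_t, v_{t+1})}(\mathbf{h}^{(v_t)})$, with no averaging. Composing along the path and setting $\mathbf{h}^{(v_0)} = \mathbf{x}^{(i)}$ gives $\mathbf{h}^{(a)} = \mathbf{f}_{(v_{p-1}, a)} \circ \cdots \circ \mathbf{f}_{(v_0, v_1)}(\mathbf{x}^{(i)}) = \mathbf{f}_{(i,a)}(\mathbf{x}^{(i)})$, which is exactly the inner map of the claimed formula.

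The crux is to show that forward and backward states agree at $a$, i.e. $\widehat{\mathbf{h}}^{(a)} = \mathbf{h}^{(a)}$; this is what lets the message ``turn around'' at the lowest common ancestor instead of travelling to the root and back. I would prove it by downward induction along the path from the root to $a$. The base case is the root condition $\widehat{\mathbf{h}}^{\mathcal{R}} = \mathbf{h}^{\mathcal{R}}$. For the inductive step, a node $v$ on this path with parent $v^{+}$ satisfies $\mathbf{h}^{(v^{+})} = \mathbf{f}_{(v, v^{+})}(\mathbf{h}^{(v)})$ from the forward pass and $\widehat{\mathbf{h}}^{(v)} = \mathbf{f}^{-1}_{(v, v^{+})}(\widehat{\mathbf{h}}^{(v^{+})})$ from the backward pass (a single parent in a tree); combining the hypothesis $\widehat{\mathbf{h}}^{(v^{+})} = \mathbf{h}^{(v^{+})}$ with invertibility gives $\widehat{\mathbf{h}}^{(v)} = \mathbf{f}^{-1}_{(v,v^{+})}(\mathbf{f}_{(v,v^{+})}(\mathbf{h}^{(v)})) = \mathbf{h}^{(v)}$, and iterating down to $a$ closes the induction. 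This is the main obstacle, and it is also the source of the word ``approximated'' in the statement: the equality is exact only under perfect flow invertibility and exact node consistency (the $\mathbf{KL}$ terms of~\eqref{eq:elbo_dag} driven to zero), so in a trained VFG it holds approximately.

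Finally I would handle the backward segment. Writing the unique path $a = w_0, w_1, \ldots, w_q = j$ with $w_t = pa(w_{t+1})$, the backward rule along this path gives $\widehat{\mathbf{h}}^{(w_{t+1})} = \mathbf{f}^{-1}_{(w_{t+1}, w_t)}(\widehat{\mathbf{h}}^{(w_t)}) = \mathbf{f}_{(w_t, w_{t+1})}(\widehat{\mathbf{h}}^{(w_t)})$. Composing down to $j$ yields $\widehat{\mathbf{h}}^{(j)} = \mathbf{f}_{(a,j)}(\widehat{\mathbf{h}}^{(a)})$, and since $j$ is a leaf its backward state is precisely the reconstructed observation $\widehat{\mathbf{x}}^{(j)}$. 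Substituting $\widehat{\mathbf{h}}^{(a)} = \mathbf{h}^{(a)} = \mathbf{f}_{(i,a)}(\mathbf{x}^{(i)})$ from the two previous steps gives $\widehat{\mathbf{x}}^{(j)} = \mathbf{f}_{(a,j)}(\mathbf{f}_{(i,a)}(\mathbf{x}^{(i)}))$, as claimed.
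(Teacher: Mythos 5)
Your proof is correct and takes essentially the same route as the paper's: compose the forward flows from $i$ up to the lowest common ancestor $a$, establish $\widehat{\mathbf{h}}^{(a)} = \mathbf{h}^{(a)}$ via the round trip to the root, and compose the backward flows from $a$ down to $j$. The only difference is one of rigor: where the paper asserts the consistency at $a$ in a single sentence (``as there are no other observations, with forward and backward message passing to and from the root node\ldots''), you prove it by downward induction from the root using flow invertibility and the masked aggregation rule --- a worthwhile elaboration of the same argument, not a different one.
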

\begin{proof}
According to the  aggregation operation~\eqref{eq:aggr_node} discussed in Section~\ref{sec:node_aggr}, at an aggregation node  $a$, the reconstruction  state of a child node $j$ is the mean reconstruction state averaging the backward messages from the parent nodes.
The reconstruction of the child node $j$ can be calculated with the average reconstruction state regarding its parent node. Apply it sequentially,  we have $\widehat{\mathbf{x}}^{(j)} = \mathbf{f}_{(a,j)}(\widehat{\mathbf{h}}^{a)})$. The forward state of node $a$ can be computed by sequentially applying forward aggregating starting from its observed descendent $i$, i.e., $\mathbf{h}^{(a)} = \mathbf{f}_{(i,a)}(\mathbf{x}^{(i)})$.
As there are no other observations, with forward and backward message passing to and from the root node, at  node $a$, we have $\mathbf{h}^{(a)} = \widehat{\mathbf{h}}^{(a)}$.
Therefore, we have $\widehat{\mathbf{x}}^{(j)} =  \mathbf{f}_{(a,j)}(\mathbf{f}_{(i, a)}(\mathbf{x}^{(i)}))$.
\end{proof}

\newpage

Considering the flow-based model~(\ref{eq:flow}), we have the following identity for each node of the graph structure:
\begin{align*}
 p(\mathbf{h}^{(i)} | \mathbf{h}^{pa(i)})  &= p(\mathbf{h}^{pa(i)}) \big|\det(\frac{\partial \mathbf{h}^{pa(i)} }{\partial \mathbf{h}^{(i)}})\big| \\
 &= p(\mathbf{h}^{pa(i)}) \big|\det(\mathbf{J}_{\mathbf{h}^{pa(i)}}(\mathbf{h}^{(i)}))\big| \, .
\end{align*}
Lemma~\ref{lm:apprx} provides an approach to conduct inference on a tree and impute missing values in the data. It is easy to extend the inference method to DAG VFGs.

\vspace{0.1in}
\section{Numerical Experiments}\label{sec:numerical}

In this section, we provide several studies to validate the proposed VFG models.
The first  application we present is missing value imputation. We compare our method with different baseline models  on several  datasets.
The second set of experiments is to evaluate VFG models on three different datasets, i.e.,  MNIST, Caltech101, and Omniglot, with ELBO and likelihoods as the score.
The third application we present here is the task of learning posterior distribution of the latent variables corresponding to the hidden explanatory factors of variations in the data~\citep{bengio2013representation}.
For that latter application, the model is trained and evaluated on the MNIST handwritten digits dataset.

\vspace{0.1in}

In this paper,  we would rather assume the VFG graph structures are given and fixed.  In the following experiments, the VFG structures are given in the dataset or  designed heuristically (as other neural networks) for the sake of numerical illustrations.  Learning the structure of VFG is an interesting research problem and is left for future works. A simple approach for VFG structure learning is to regularize the graph with the DAG structure penalty~\citep{Zheng2018,wehenkel2021graphical}.

All the experiments are conducted on NVIDIA-TITAN X (Pascal) GPUs.
In  the experiments, we use the same  coupling block~\citep{Dinh2016DensityEU} to construct different flow functions. The coupling block consists of three fully connected layers~(of dimension $64$) separated by two RELU layers along with the coupling trick.
Each flow function has block number $\mathcal{B} > 3$.


\subsection{Evaluation on Inference with Missing Entries Imputation}

We now focus on the task of imputing missing entries in a graph structure. For all the following experiments, the models are trained on the training set and are used to infer the missing entries of samples in the testing set.
We first study the proposed VFGs on two datasets without given graph structures, and we compare VFGs with several conventional methods that do not require the  graph  structures in the data.  We then compare VFGs with graphical models that can perform inference on explicit graphs.

\subsubsection{Synthetic Dataset}

In this set of experiments, we study different methods  with synthetic datasets. The baselines for this set of experiments include mean value method~(Means), iterative imputation~(Iterative)~\citep{buck1960method}, and multivariate imputation by chained equation~(MICE)~\citep{van2011mice}.  Mean Squared Error as the metric of reference in order to compare the different methods for the imputation task. We use the baseline implementations in~\citet{scikit-learn}  in the experiments.

\vspace{0.1in}

We generate $10$ synthetic datasets (using different seeds) of $1,300$ data points, $1,000$ for the training phase of the model, $300$ for imputation testing.
Each data sample  has $8$ dimensions with $2$ latent variables.
Let $z_1 \sim \mathcal{N}(0,1.0^2)$ and $z_2 \sim  \mathcal{N}(1.0,2.0^2)$ be the latent variables. For a sample $\mathbf{x}$, we have  $x_1=x_2 = z_1, x_3=x_4= 2\textrm{sin}(z_1), x_5=x_6 =z_2$, and $x_7= x_8 = z_2^2$.  In the testing dataset, $x_3$, $x_4$, $x_7$, and $x_8$ are missing. We use a VFG model with a single average aggregation node that has four children, and each child connects the parent with a flow function consisting of 3 coupling layers~\citep{Dinh2016DensityEU}. Each child takes 2 variables as input data section, and the latent dimension of the VFG is $2$.
We compare, in Figure~\ref{fig:sim}, our VFG method with the baselines described above using boxplots on obtained MSE values for those $10$ simulated datasets.
We can see that the proposed VFG model performs much better than mean value, iterative, and MICE methods. Figure~\ref{fig:sim} shows that VFGs also demonstrates more performance robustness compared against other methods.

\begin{figure}[h]

\vspace{-0.1in}

  \centering
      \includegraphics[width=2.5in]{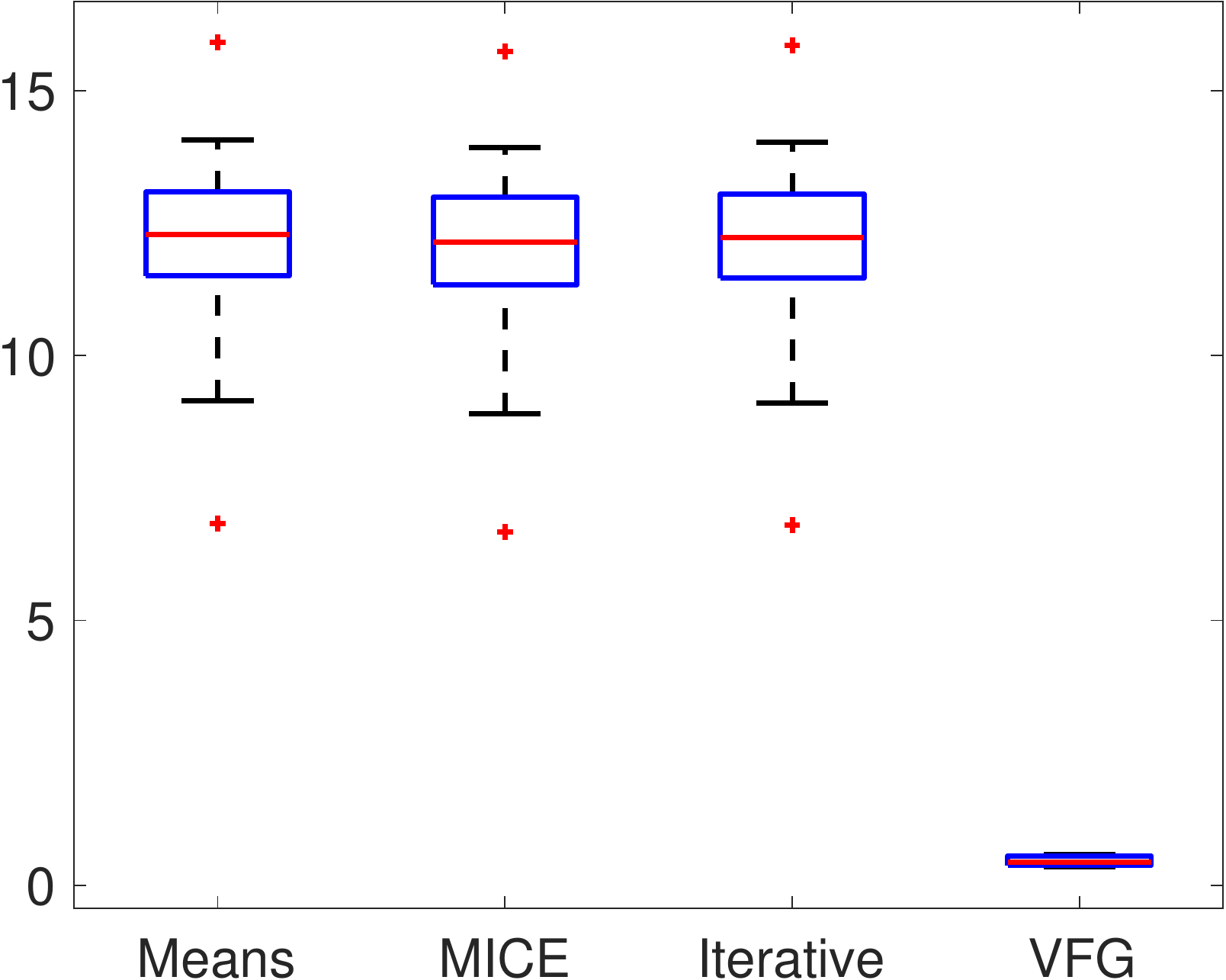}

\vspace{-0.1in}

    \caption{Synthetic datasets: MSE boxplots of VFG and baseline methods.}
    \label{fig:sim}\vspace{-0.15in}
\end{figure}

\subsubsection{California Housing Dataset}
We further investigate the method on a real dataset.
The California Housing dataset  has 8 feature entries and $20,640$ data samples.
We use the first $20,000$ samples for training  and $100$ of the rest for testing.
We get  4 data sections, and each section contains 2 variables.
In the testing set, the second section is assumed missing for illustration purposes, as the goal is to impute this missing section. In addition to the three baselines in introduced the main file, we also compared with KNN~(k-nearest neighbor) method. Again,  we use the  implementations from~\citet{scikit-learn}  for the baselines in this set of  experiments.

\begin{table}[b!]

\vspace{-0.2in}

\centering
\caption{California Housing dataset: Imputation Mean Squared Error (MSE) results.}\vspace{0.1in} \label{tab:imp_arrhytmia}
 \begin{tabular}{l | c  }\hline
\textit{Methods} & \textit{Imputation MSE}  \\
\hline
Mean Value &1.993 \\
MICE & 1.951\\
Iterative Imputation & 1.966\\
KNN (k=5) &1.969 \\
\hline
VFG & \textbf{1.356} \\
\hline
\end{tabular}\vspace{-0.1in}
\end{table}

The VFG structure is designed heuristically.
We construct a tree structure VFG with 2 layers. The first layer has two aggregation nodes, and each of them has two children.
The second layer consists of one aggregation node that has two children connecting with the first layer.
Each flow function has $\mathcal{B}=4$ coupling blocks.
 Table~\ref{tab:imp_arrhytmia} shows that our model yields significantly better results than any other method in terms of prediction error. It indicates that with the help of universal approximation power of neural networks, VFGs have superior inference capability.

\subsubsection{Comparison with Graphical Models}

In this set of experiments, we use a synthetic Gaussian graphical model dataset from the bnlearn package~\citep{scutari2009learning} to evaluate the proposed model. The data graph structure is given. The dataset consists of 7 variables and 5,000 samples.  Sample values at each node are generated according to a  structured causal model with  a  diagram given by Figure~\ref{fig:gaussian_graph}.

\begin{figure}[h!]

\vspace{0.2in}

\begin{center}
 \includegraphics[width=2.5in]{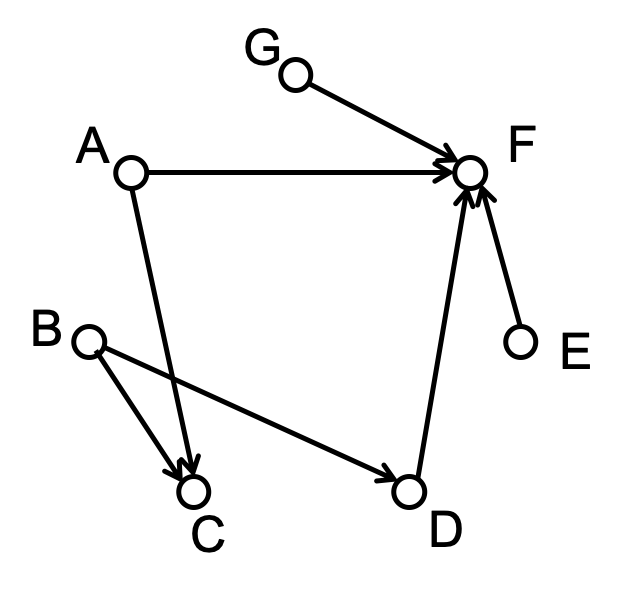}
\end{center}

\vspace{-0.4in}

\caption{Graph structure for Gaussian graphical model dataset. }
\label{fig:gaussian_graph}\vspace{0.2in}
\end{figure}

In Figure~\ref{fig:gaussian_graph}, each node represents a variable generated with a function of its parent nodes. For instance, node $V$ is generated with  $V= \mathbf{f}(pa(V), N_V)$. Here $pa(V)$ is the set of $V$'s parents, and $N_V$ is a noise term for $V$.    A node without any parent is determined only by the noise term.  $\mathbf{f}()$ is $V$'s generating function, and  only linear functions are used in this dataset. All the noise terms are Normal distributions.

\vspace{0.1in}

We take  Bayesian network implementation~\citep{scutari2009learning}  and sum-product network (SPN) package~\citep{SPFlow,poon2011sum} as  experimental baselines.  4\,500 samples are used for training, and the rest 500 samples  are for testing. The structure of VFG is designed based on  the directed graph given by Figure~\ref{fig:gaussian_graph}. In the imputation task, we take Node `F' as the missing entry, and use the values of other node to impute the missing entry.  Table~\ref{tab:gaussian} gives the imputation results from the three methods.  We can see that VFG achieves the smallest prediction error. Besides the imputation MSE,  Table~\ref{tab:gaussian} also gives the prediction error variance. Compared against  Bayesian net  and SPN, VFG achieves much smaller performance variance. It means VFGs are much more stable in this set of experiments.

\begin{table}[h]
\centering
\caption{Gaussian graphical model dataset: Imputation Mean Squared Error (MSE) and Variance results.} \label{tab:gaussian}\vspace{0.1in}
 \begin{tabular}{l | c | c | c  }\hline
\textit{Methods} & Bayesian Net & SPN & VFG \\
\hline
Imputation MSE &  1.059 & 0.402 &  \textbf{0.104} \\
Imputation Variance  & 2.171  &   0.401 &  \textbf{0.012} \\
\hline
\end{tabular}
\end{table}

\begin{table}[b!]

\vspace{-0.1in}

\caption{Numerical values of negative log-likelihood and free energy (negative evidence lower bound) for static MNIST, Caltech101, and Omniglot datasets.}\vspace{0.1in}
\centering
\label{tab:elbo}
\resizebox{1.0\columnwidth}{!}{
\begin{tabular}{l | c  c   c  c  c  c }
\hline
 \multirow{2}{0nc}{\textbf{Model}} & \multicolumn{2}{c}{\textbf{MNIST}} & \multicolumn{2}{c}{\textbf{Caltech101}} & \multicolumn{2}{c}{\textbf{Omniglot}} \\
 & -ELBO & NLL  &  -ELBO & NLL  & -ELBO & NLL  \\
\hline
 VAE~\citep{kingma2013auto} & 86.55 $\pm$ 0.06  & 82.14 $\pm$ 0.07& 110.80 $\pm$ 0.46 & 99.62 $\pm$ 0.74 & 104.28 $\pm$ 0.39 & 97.25 $\pm$ 0.23 \\
Planer~\citep{rezende2015variational} & 86.06 $\pm$ 0.31 & 81.91 $\pm$ 0.22 & 109.66 $\pm$ 0.42 & 98.53 $\pm$ 0.68 & 102.65 $\pm$ 0.42 & 96.04 $\pm$ 0.28 \\
IAF~\citep{kingma2016improving} & 84.20 $\pm$ 0.17& 80.79 $\pm$ 0.12 & 111.58 $\pm$ 0.38 & 99.92 $\pm$ 0.30 & 102.41 $\pm$ 0.04 & 96.08 $\pm$ 0.16 \\
SNF~\citep{berg2018sylvester} & 83.32 $\pm$ 0.06 & 80.22 $\pm$ 0.03 & 104.62 $\pm$ 0.29 & 93.82 $\pm$ 0.62 & 99.00 $\pm$ 0.04 & 93.77 $\pm$ 0.03 \\
\hline
VFG (ours) &\textbf{80.80 $\pm$ 0.76} & \textbf{63.66 $\pm$ 0.14} & \textbf{67.26 $\pm$ 0.53} & \textbf{65.74 $\pm$ 0.84}  &\textbf{80.16 $\pm$ 0.73 } & \textbf{78.65 $\pm$ 0.66}\\
\hline
\end{tabular}}\vspace{-0.1in}
\end{table}

\subsection{ELBO and Likelihood}\label{sec:exp:elbo}

We further qualitatively compare our VFG model with existing methods on data distribution learning and  variational inference using three standard datasets. The baselines we compare in this experiment are VAE~\citep{kingma2013auto}, Planer~\citep{rezende2015variational},  IAF~\citep{kingma2016improving}, and SNF~\citep{berg2018sylvester}.
The evaluation datasets and setup are following two standard flow-based variational models, Sylvester Normalizing Flows~\citep{berg2018sylvester} and~\citep{rezende2015variational}.
We use a tree VFG with structure as shown in Figure~\ref{fig:mnist_tree} for three datasets.

\begin{figure}[h]

\centering
       \includegraphics[width=2.5in]{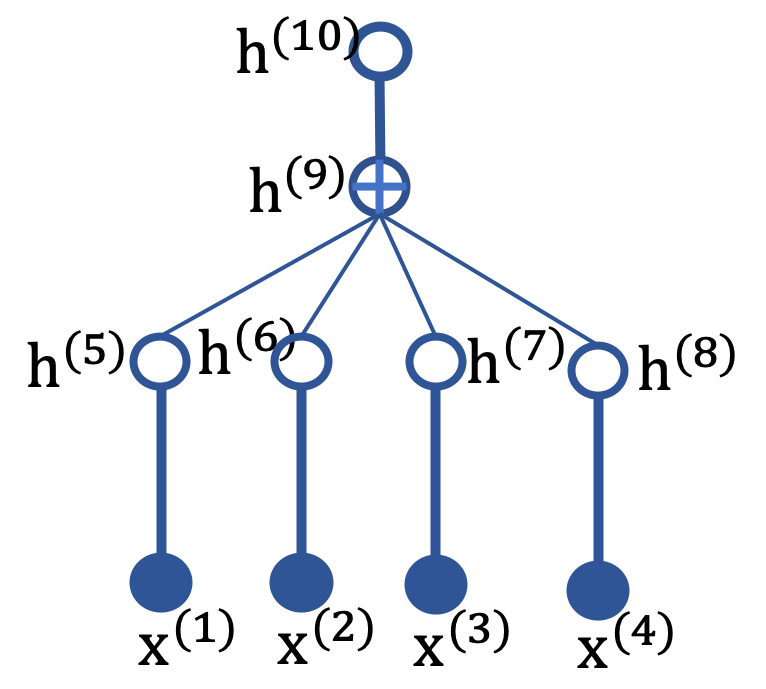}

\vspace{-0.1in}

  \caption{MIST Tree structure.}
    \label{fig:mnist_tree}
\end{figure}

We train the tree VFG  with the following ELBO objective that incorporate a $\beta$ coefficient for the $\mathbf{KL}$ terms.
Empirically, a small $\beta$ yields better ELBO and NLL values, and we set $\beta$ around 0.1 in the experiments.  Recall that
\begin{align} \notag
&\text{ELBO}= \mathcal{L}(\mathbf{x}; \theta)
    = \mathbb{E}_{q(\mathbf{h}^{1:L}|\mathbf{x})}\big[ \log p(\mathbf{x}|\mathbf{h}^{1:L})  \big] - \beta\sum_{l=1}^{L} \mathbf{KL}^l.
\end{align}

Table~\ref{tab:elbo} presents the negative evidence lower bound~(-ELBO) and the estimated negative likelihood~(NLL) for all methods on three datasets: MNIST, Caltech101, and Omniglot.  The baseline methods are VAE based methods  enhanced with  normalizing flows.  They use 16 flows to improve the posterior estimation. SNF is orthogonal Sylvester flow method with a bottleneck of M = 32. We set the VFG coupling block~\citep{Dinh2016DensityEU} number with  $\mathcal{B}=4$, and following~\citep{berg2018sylvester} we run multiple times to get the mean and standard derivation as well. VFG can achieve superior EBLO as well as NLL values on all three datasets compared against  the baselines as given in Table~\ref{tab:elbo}. VFGs can achieve better variational inference and data distribution modeling results (ELBOs and  NLLs) in Table~\ref{tab:elbo} in part due to VFGs' universal approximation power as given in Theorem~\ref{thm:lp_univ}. Also, the intrinsic  approximate invertible property of VFGs ensures the decoder or generative model in a VFG to achieve smaller reconstruction errors for data samples and hence smaller NLL values.

\subsection{Latent Representation Learning on MNIST}\label{sec:exp:mnist}

In this set of experiments, we evaluate VFGs on latent representation learning of the MNIST dataset~\citep{Lecunmnist2010}. We construct a tree  VFG model depicted in Figure~\ref{fig:mnist_tree}.
In the first layer, there are 4 flow functions, and each of them takes $14\times 14$ image blocks as the input.
Thus a $28\times 28$ input image is divided into four $14\times 14$ blocks as the input of VFG model. We use $\mathcal{B}=4$ for all the flows.
The latent dimension for this model is $m=196$.
Following~\citet{Sorrenson2020}, the VFG model is trained with image labels to learn the latent representation of the input data. We set the parameters of $\mathbf{h}^L$'s prior distribution as a function of image label, i.e., $\lambda^L(u)$, where $u$ denotes the image label.
\begin{figure}[h]
    \centering
       \includegraphics[width=3in]{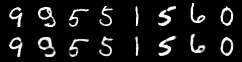}
        \caption{(Top) original MNIST digits. (Bottom) reconstructed images using VFG.}
    \label{fig:reconst}
\end{figure}
In practice, we use $10$ trainable $\lambda^L$s regarding the $10$ digits.
The images in the second row of
Figure~\ref{fig:reconst} are reconstructions of MNIST samples extracted from the testing set, displayed in the first row of the same Figure, using our proposed VFG model.

\vspace{0.1in}

\begin{figure}[b!]
\begin{center}
\mbox{
       \includegraphics[width=2.5in]{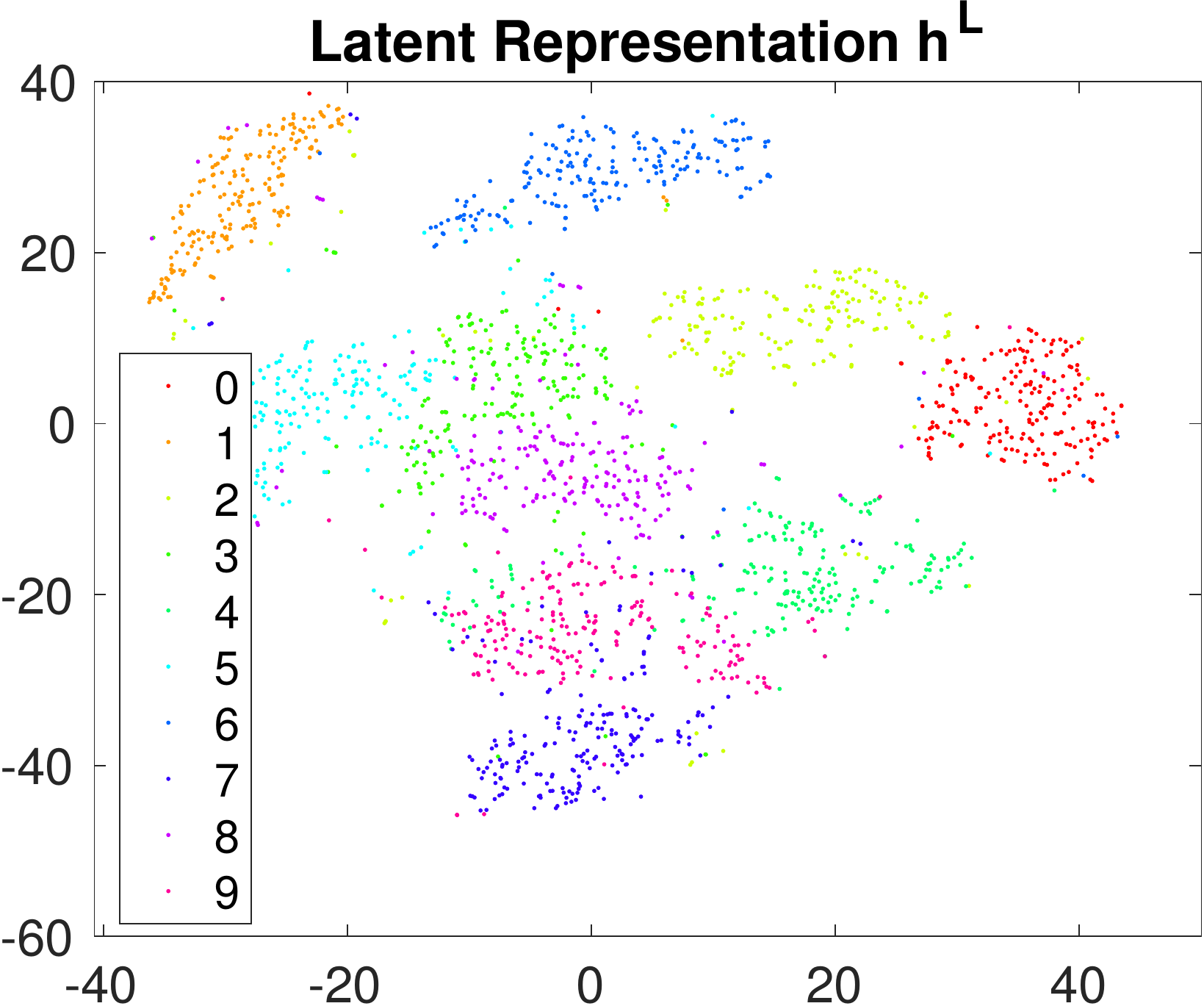}

\hspace{0.1in}

       \includegraphics[width=2.5in]{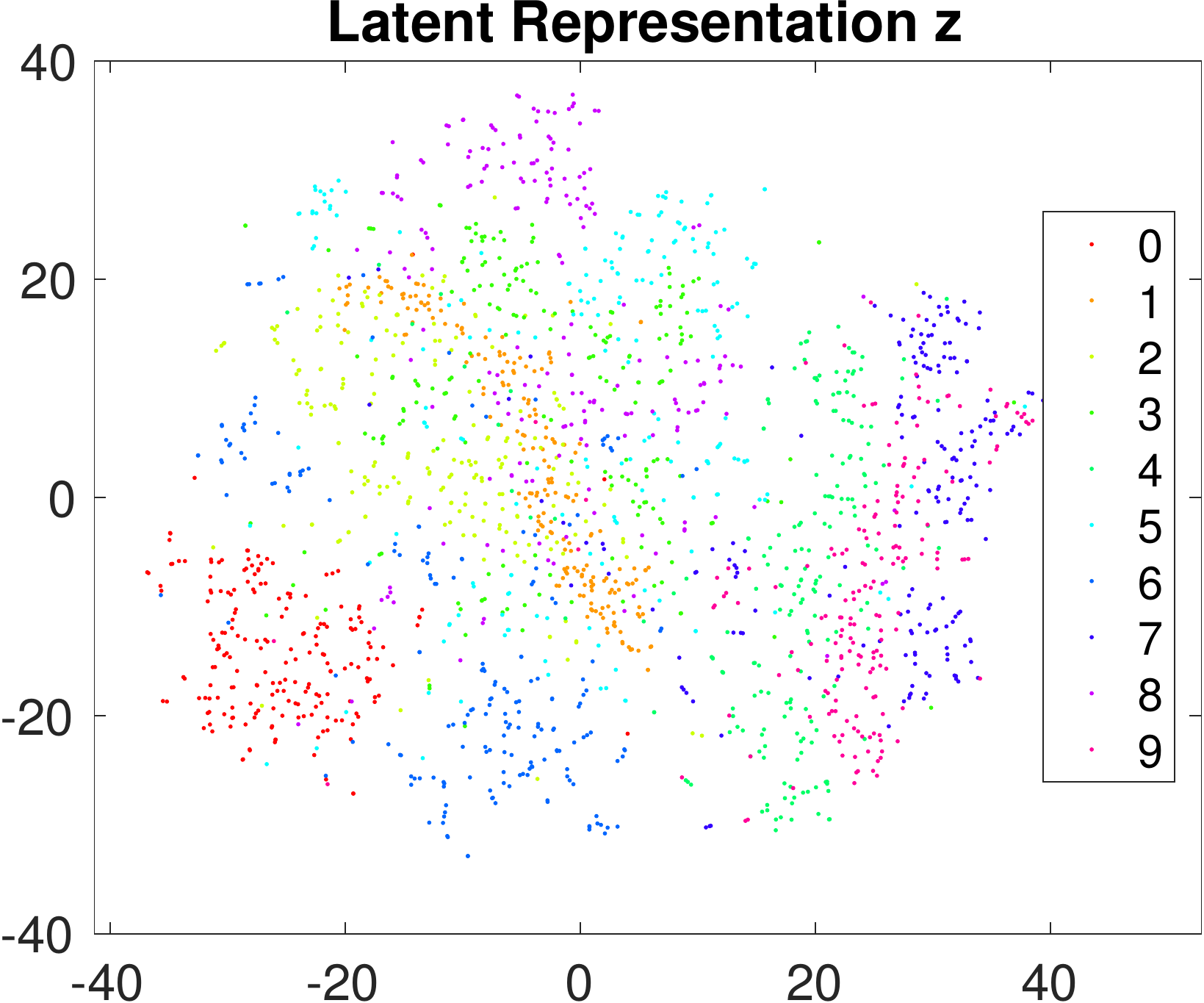}
}
\end{center}

\vspace{-0.1in}

        \caption{  t-SNE of latent variables for VFG (Left) and coupling-layer based~flow~(Right)~on~MNIST.}
    \label{fig:z_tsne}
\end{figure}

Figure~\ref{fig:z_tsne}-Left shows  t-distributed stochastic neighbor embedding (t-SNE)~\citep{maaten2008visualizing} plot of $2,000$ testing images' latent variables learned with our model, and $200$ for each~digit.
Figure~\ref{fig:z_tsne}-Left illustrates that VFG can learn separated latent representations to distinguish different hand-written numbers. For comparison, we also present the results of a baseline model. The baseline model (coupling-based flow) is constructed using the same coupling block and similar number of parameters as VFGs but with $28\times 28$ as the input and latent dimension.  Figure~\ref{fig:z_tsne}-Right gives the baseline  coupling-layer-based flow  training and testing with the same procedures. These show that coupling-based flow cannot  give a clear division between some digits, e.g., 1 and 2, 7 and 9  due to the bias introduced by the high-dimensional redundant latent variables.

\newpage\clearpage

\begin{figure}[b!]

\vspace{-0.1in}

\begin{center}
 \includegraphics[width=4in]{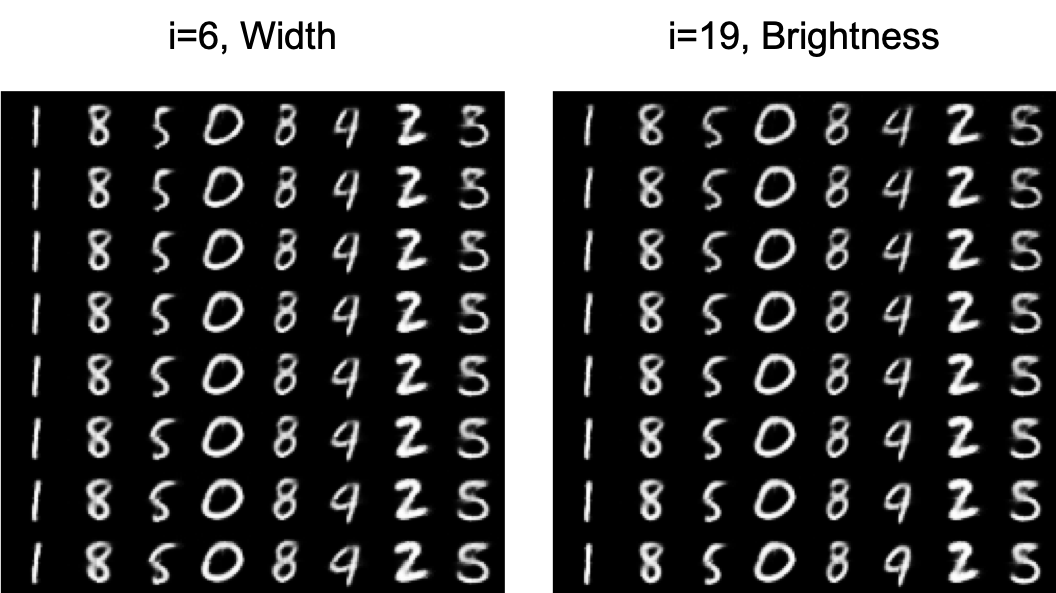}
  \includegraphics[width=4in]{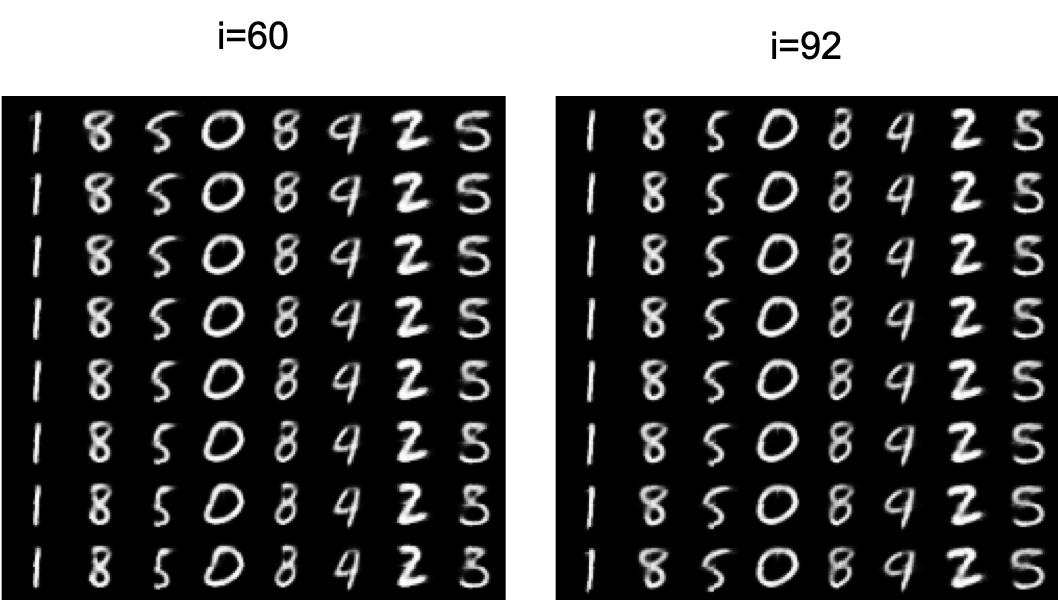}
    \includegraphics[width=4in]{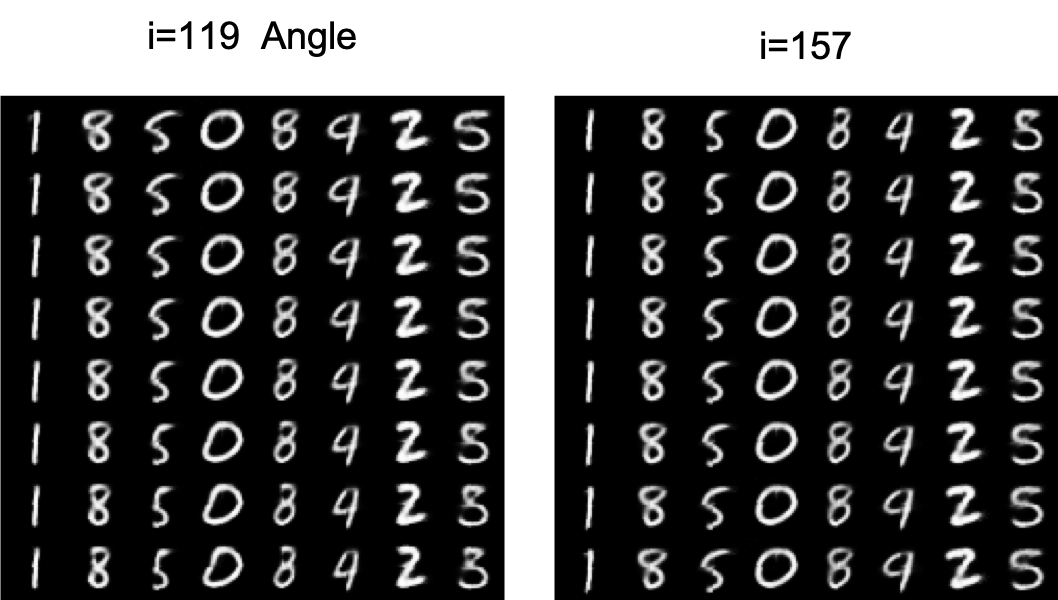}
\end{center}

\vspace{-0.2in}

\caption{MNIST: Increasing each latent variable from a small value to a larger one.}\label{fig:mnist_dis}\vspace{-0.1in}
\end{figure}

To provide a description of the learned latent representation, we first obtain the root latent variables of a set of images through forward message passing. Each latent variable's values are changed increasingly within a range centered at the value of the latent variable obtained from last step.
This perturbation is performed for each image in the set.
Figure~\ref{fig:mnist_dis} shows the change of images by increasing one latent variable from a small value to a larger one. The figure presents some of the latent variables that have obvious effects on images, and most of the $m=196$ variables do not impact the generation significantly. Latent variables $i=6$ and $i=60$ control the digit width. Variable $i=19$ affects the brightness.  $i=92, i=157$ and some of the variables not displayed here control the style of the generated digits.

\section{Discussion}\label{sec:discuss}

One of the motivations for proposing our VFG algorithm is to develop a tractable model that can be used for distribution learning and posterior inference.
As long as the node states in the aggregation nodes are consistent, we can always apply VFGs in order to infer missing values.
We provide more discussion on the structures of VFGs in the sequel.

\subsection{Benefits of Encoder-decoder Parameter Sharing}
There are several advantages for the encoder and decoder to share parameters.
 Firstly, it makes the network's structure simple.
 Secondly, the training and inference can be simplified with concise and simple graph structures.
 Thirdly, by leveraging invertible flow-based functions, VFGs  obtain tighter ELBOs in comparison with VAE based models.
 The intrinsic invertibility introduced by flow functions ensures the decoder or generative model in a VFG  achieves smaller reconstruction errors for data samples and hence smaller NLL values and tighter ELBO. Whereas without the intrinsic constraint of invertibility or any help or regularization from the encoder, VAE-based models have to learn an unassisted mapping function~(decoder) to reconstruct all data samples with the latent variables, and there are always some discrepancy errors in the reconstruction that lead to relatively larger NLL values and hence inferior ELBOs.

\subsection{Structures of VFGs}
 In the experiments, the model structures have been chosen heuristically and for the sake of numerical illustrations. A tree VFG model can be taken as a dimension reduction model that is available for missing value imputation as well. Variants of those structures will lead to different numerical results and at this point, we can not claim any generalization regarding the impact of the VFG structure on the outputs. Meanwhile, learning the structure of VFG is an interesting research problem and is left for future works.  VFG structures could be learned through the regularization of DAG structures~\citep{Zheng2018,wehenkel2021graphical}.

 VFGs rely on minimizing the KL term to achieve \emph{consistency} in  aggregation nodes. As long as the aggregation nodes retain consistency, the model always has a tight ELBO and can be applied to tractable posterior inference.
 According to~\citet{teshima2020coupling}, coupling-based flows are endowed with the universal approximation power.
 Hence, we believe that the consistency of aggregation nodes on a  VFG can be attained with a tight ELBO.

\section{Conclusion}\label{sec:conclusion}

In this paper, we propose VFG, a variational flow graphical model that aims at bridging the gap between  flow-based models and the paradigm of graphical models.
Our VFG model learns  data distribution and latent representation  through message passing between nodes in the model structure.
We leverage the power of invertible flow functions in any general graph structure to simplify the inference step of the latent nodes given some input observations.
We illustrate the effectiveness of our variational model through experiments.
Future work includes applying our VFG model to  relational data structure learning and reasoning.

\newpage\clearpage

\appendix

\noindent \textbf{\huge Appendix}

\section{ELBO of Tree VFGs}\label{appd:tree_elbo}

\begin{figure}[h]
    \centering
    \includegraphics[width=2.0in]{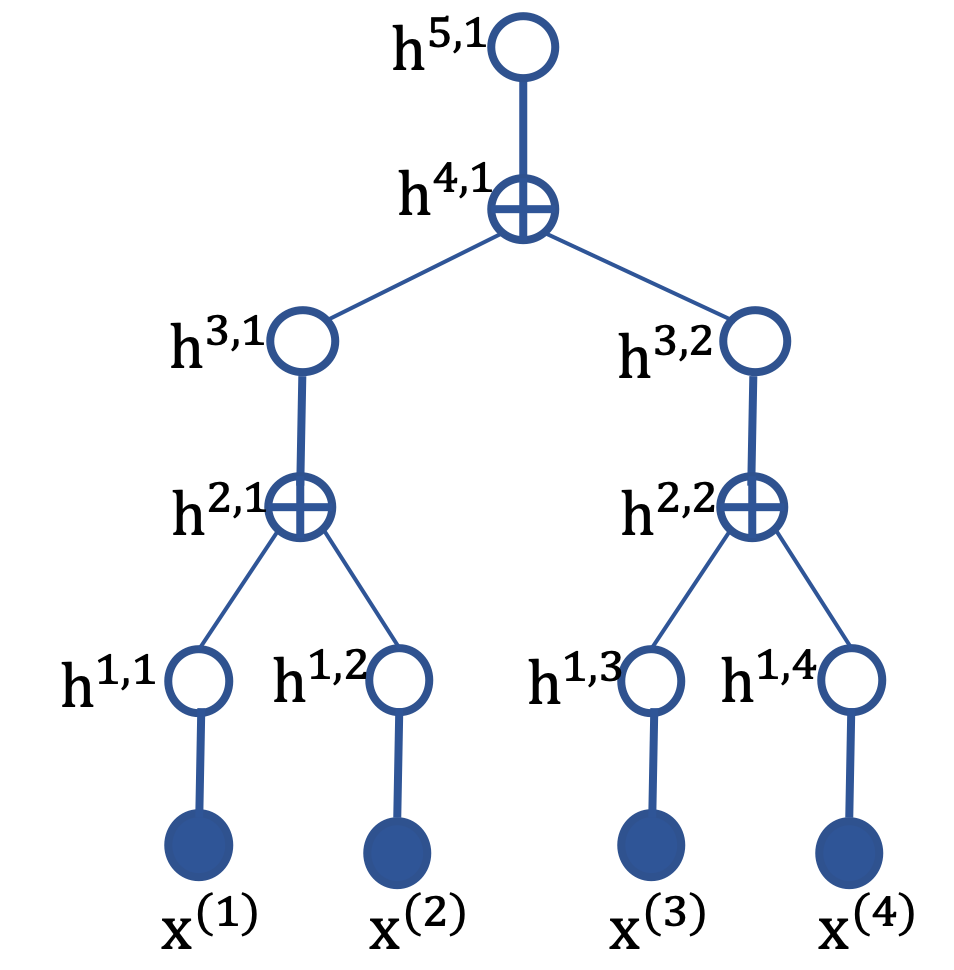}\hspace{0.3in}
    \includegraphics[width=2.5in]{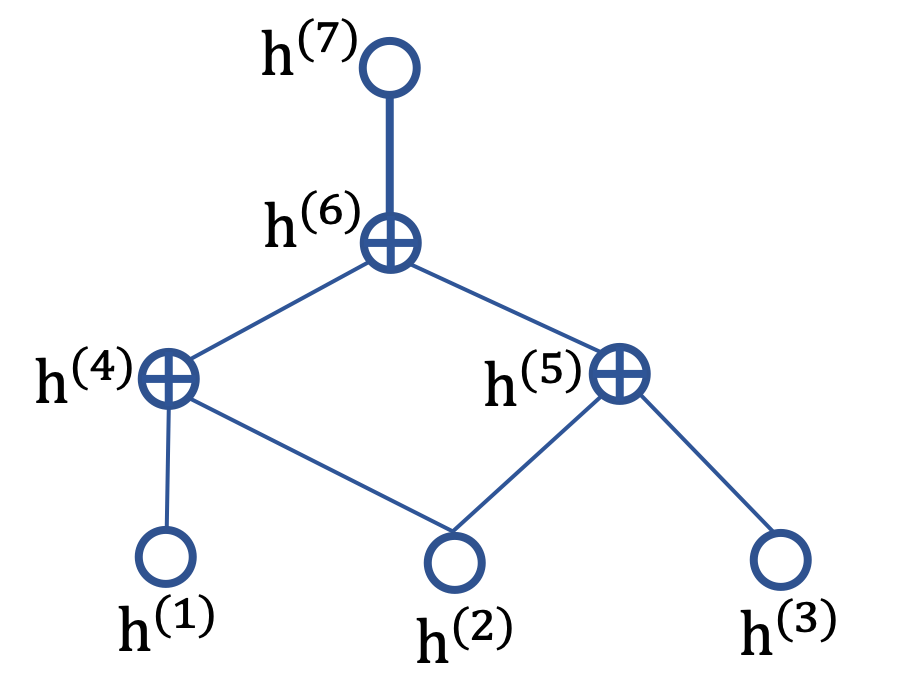}
    \caption{(Left) A  tree VFG with $L=5$ and three aggregation nodes. (Right) A DAG  with inverse topology order \big\{ \{1,2,3\}, \{4,5\}, \{6\},  \{7\} \big\}, and they  correspond to layers 0 to 3. }
    \label{fig:tree_dag_vfg}
\end{figure}

Let each data sample has $k$ sections, i.e., $\mathbf{x} = [\mathbf{x}^{(1)}, ..., \mathbf{x}^{(k)}]$. VFGs are graphical models that can integrate different sections or components of the dataset.  We assume that for each pair of connected nodes, the edge is an invertible flow function.
The vector of parameters for all the edges is denoted by $\theta$.
The forward message passing starts from $\mathbf{x}$ and ends at $\mathbf{h}^L$, and backward message passing in the reverse direction. We start with the hierarchical generative tree network structure illustrated by an example in Figure~\ref{fig:tree_dag_vfg}-Left.
Then the marginal likelihood term of the data reads
\begin{align*}
p(\mathbf{x}| \mathbf{\theta}) = \sum_{\mathbf{h}^1, ..., \mathbf{h}^L} p(\mathbf{h}^L | \theta)p(\mathbf{h}^{L-1} | \mathbf{h}^{L},\theta) \cdot \cdot  \cdot  p(\mathbf{x} | \mathbf{h}^{1}, \theta) \, .
\end{align*}
The hierarchical generative model is given by factorization
\begin{align}\label{eq:prior}
p(\mathbf{h}) =  p( \mathbf{h}^{L})\mathbf{\Pi}_{l=1}^{L-1}p(\mathbf{h}^{l} | \mathbf{h}^{l+1}) .
\end{align}
The probability density function $p(\mathbf{h}^{l-1} | \mathbf{h}^{l})$ in the generative model is modeled with one or multiple invertible normalizing flow functions. The hierarchical posterior~(recognition network) is factorized as
\begin{align}\label{eq:posterior2}
q_{\theta}(\mathbf{h}| \mathbf{x}) =  q(\mathbf{h}^1 | \mathbf{x})  q(\mathbf{h}^2 | \mathbf{h}^1) \cdot \cdot  \cdot  q(\mathbf{h}^{L} | \mathbf{h}^{L-1}).
\end{align}
Draw samples from the generative model~(\ref{eq:prior})
involves sequential conditional sampling from the top of the tree to the bottom, and computation of the recognition model~(\ref{eq:posterior2}) takes the reverse direction. Notice that
\begin{align*} 
q(\mathbf{h}| \mathbf{x}) = q(\mathbf{h}^1 | \mathbf{x})  q(\mathbf{h}^{2:L} | \mathbf{h}^1) \, .
\end{align*}
With the hierarchical structure of a tree, we further have
\begin{align} \label{eq:chain_post}
&q(\mathbf{h}^{l:L}|\mathbf{h}^{l-1}) = q(\mathbf{h}^{l}|\mathbf{h}^{l-1}) q(\mathbf{h}^{l+1:L}|\mathbf{h}^{l}\mathbf{h}^{l-1}) =q(\mathbf{h}^{l}|\mathbf{h}^{l-1}) q(\mathbf{h}^{l+1:L}|\mathbf{h}^{l})  \\ \label{eq:chain_prior}
& p(\mathbf{h}^{l:L})=  p(\mathbf{h}^{l}|\mathbf{h}^{l+1:L})p(\mathbf{h}^{l+1:L})=p(\mathbf{h}^{l}|\mathbf{h}^{l+1})p(\mathbf{h}^{l+1:L})
\end{align}
By leveraging  the conditional independence  in the chain structures of both recognition and generative models, the derivation of trees' ELBO becomes easier.
\begin{align*}
\log p(\mathbf{x}) &= \log \int p(\mathbf{x}|\mathbf{h})p(\mathbf{h}) d \mathbf{h} \\
&= \log \int \frac{q(\mathbf{h}|\mathbf{x})}{q(\mathbf{h}|\mathbf{x})} p(\mathbf{x}|\mathbf{h})p(\mathbf{h}) d \mathbf{h} \\
& \geqslant \mathbb{E}_{q(\mathbf{h}|\mathbf{x})}\big[ \log p(\mathbf{x}|\mathbf{h}) -  \log q(\mathbf{h}|\mathbf{x}) +  \log p(\mathbf{h}) \big]\\ \notag
&= \mathcal{L}(x; \theta).
\end{align*}
The last step is due to the Jensen inequality. With $\mathbf{h} =\mathbf{h}^{1:L} $,
\begin{align} \notag  
&\log p(\mathbf{x})  \geqslant  \mathcal{L}(x; \theta) \\ \notag
=& \mathbb{E}_{q(\mathbf{h}^{1:L}|\mathbf{x})}\big[ \log p(\mathbf{x}|\mathbf{h}^{1:L}) -  \log q(\mathbf{h}^{1:L}|\mathbf{x}) +  \log p(\mathbf{h}^{1:L}) \big] \\ \label{eq:elbo12L}
=&  \underbrace{\mathbb{E}_{q(\mathbf{h}^{1:L}|\mathbf{x})}\big[ \log p(\mathbf{x}|\mathbf{h}^{1:L})\big]}_{\parbox{10.5em}{Reconstruction of data}}  
-  \underbrace{\mathbb{E}_{q(\mathbf{h}^{1:L}|\mathbf{x})}\big[ \log q(\mathbf{h}^{1:L}|\mathbf{x}) - \log p(\mathbf{h}^{1:L}) \big] }_{\textbf{\text{KL}}^{1:L}}
\end{align}
With conditional independence in   the hierarchical structure, we have
$$q(\mathbf{h}^{1:L}|\mathbf{x})=q(\mathbf{h}^{2:L}|\mathbf{h}^1\mathbf{x})q(\mathbf{h}^{1}|\mathbf{x})=q(\mathbf{h}^{2:L}|\mathbf{h}^1)q(\mathbf{h}^{1}|\mathbf{x}).$$
The second term of~(\ref{eq:elbo12L}) can be further expanded as
 \begin{align} \notag
\textbf{\text{KL}}^{1:L} =&  \mathbb{E}_{q(\mathbf{h}^{1:L}|\mathbf{x})}\big[  \log q(\mathbf{h}^{1}|\mathbf{x})   +  \log q(\mathbf{h}^{2:L}|\mathbf{h}^{1})  \\
& - \log p(\mathbf{h}^{1}|\mathbf{h}^{2:L}) - \log p(\mathbf{h}^{2:L})  \big].
\end{align}
Similarly, with conditional independence of the hierarchical latent variables, $ p(\mathbf{h}^{1}|\mathbf{h}^{2:L})= p(\mathbf{h}^{1}|\mathbf{h}^{2})$. Thus
 \begin{align} \notag
\textbf{\text{KL}}^{1:L} =&  \mathbb{E}_{q(\mathbf{h}^{1:L}|\mathbf{x})}\big[  \log q(\mathbf{h}^{1}|\mathbf{x})   - \log p(\mathbf{h}^{1}|\mathbf{h}^{2})  \\  \notag
&+  \log q(\mathbf{h}^{2:L}|\mathbf{h}^{1})- \log p(\mathbf{h}^{2:L})  \big]\\ \notag
=&  \underbrace{\mathbb{E}_{q(\mathbf{h}^{1:L}|\mathbf{x})}\big[  \log q(\mathbf{h}^{1}|\mathbf{x})   - \log p(\mathbf{h}^{1}|\mathbf{h}^{2}) \big]}_{\mathbf{KL}^1}   \\ \notag
& + \underbrace{\mathbb{E}_{q(\mathbf{h}^{1:L}|\mathbf{x})}\big[  \log q(\mathbf{h}^{2:L}|\mathbf{h}^{1})- \log p(\mathbf{h}^{2:L})  \big]}_{\mathbf{KL}^{2:L}}.
\end{align}
We can further expand the $\mathbf{KL}^{2:L}$ term following similar conditional independent rules regarding the tree structure.
At level $l$, we get
$$\textbf{\text{KL}}^{l:L}
= \mathbb{E}_{q(\mathbf{h}^{1:L}|\mathbf{x})}\big[  \log q(\mathbf{h}^{l:L}|\mathbf{h}^{l-1})- \log p(\mathbf{h}^{l:L})  \big].$$
With~(\ref{eq:chain_post}) and~(\ref{eq:chain_prior}), it is easy to show that
 \begin{align} \label{eq:kl_lL}
\textbf{\text{KL}}^{l:L}
=&  \underbrace{\mathbb{E}_{q(\mathbf{h}^{1:L}|\mathbf{x})}\big[  \log q(\mathbf{h}^{l}|\mathbf{h}^{l-1})   - \log p(\mathbf{h}^{l}|\mathbf{h}^{l+1}) \big]}_{\mathbf{KL}^l} \notag \\
&+ \underbrace{\mathbb{E}_{q(\mathbf{h}^{l:L}|\mathbf{x})}\big[  \log q(\mathbf{h}^{l+1:L}|\mathbf{h}^{l})- \log p(\mathbf{h}^{l+1:L})  \big]}_{\mathbf{KL}^{l+1:L}}.
\end{align}
The ELBO~(\ref{eq:elbo12L}) can be written as
\begin{align} \label{eq:elbo0}
\mathcal{L}(\mathbf{x}; \theta) = \mathbb{E}_{q(\mathbf{h}^{1:L}|\mathbf{x})}\big[ \log p(\mathbf{x}|\mathbf{h}^{1:L})  \big] - \sum_{l=1}^{L-1} \mathbf{KL}^l -\mathbf{KL}^L.
\end{align}
When $1\leqslant l \leqslant L-1$
 \begin{align} \label{eq:kl_l}
 \mathbf{KL}^l=\mathbb{E}_{q(\mathbf{h}^{1:L}|\mathbf{x})}\big[  \log q(\mathbf{h}^{l}|\mathbf{h}^{l-1})   - \log p(\mathbf{h}^{l}|\mathbf{h}^{l+1}) \big].
 \end{align}
According to conditional independence, the expectation regarding variational distribution layer $l$ just depends on layer $l-1$. We can simplify the expectation each term of~(\ref{eq:elbo0}) with the default assumption that all latent variables are generated regarding data sample $\mathbf{x}$.  Therefore the ELBO~(\ref{eq:elbo0}) can be simplified as
 \begin{align} \label{eq:elbo1}
\mathcal{L}(\mathbf{x}; \theta) = \mathbb{E}_{q(\mathbf{h}^{1}|\mathbf{x})}\big[ \log p(\mathbf{x}|\widehat{\mathbf{h}}^{1})  \big] - \sum_{l=1}^{L} \mathbf{KL}^l.
\end{align}
The $\mathbf{KL}$
term~(\ref{eq:kl_l}) becomes
\begin{align*}
 \mathbf{KL}^l=\mathbb{E}_{q(\mathbf{h}^{l}|\mathbf{h}^{l-1})}\big[  \log q(\mathbf{h}^{l}|\mathbf{h}^{l-1})   - \log p(\mathbf{h}^{l}|\widehat{\mathbf{h}}^{l+1}) \big].
 \end{align*}
When $l=L$,
$$\mathbf{KL}^L =  \mathbb{E}_{q(\mathbf{h}^{L}|\mathbf{h}^{L-1})}\big[  \log q(\mathbf{h}^{L}|\mathbf{h}^{L-1})- \log p(\mathbf{h}^{L})  \big].$$

\section{ELBO of DAG VFGs}\label{appd:dag_elbo}
Note that if we reverse the edge directions in a DAG, the resulting graph is still a DAG graph.
The nodes can be listed in a topological order regarding the DAG structure as shown in Figure~\ref{fig:tree_dag_vfg}-Right.

By taking the topology order as the layers in tree structures, we can derive the ELBO for DAG structures.
Assume the DAG structure has $L$ layers, and the root nodes are in layer $L$.
We denote by $\mathbf{h}$ the vector of latent variables, then following~(\ref{eq:elbo12L}) we develop the ELBO as
\begin{align}  \label{eq:dag_elbo}
\log p(\mathbf{x}) & \geqslant  \mathcal{L}(x;\theta) \\ \notag
&=   \mathbb{E}_{q(\mathbf{h} | \mathbf{x})} \bigg[ \log  \frac{p(\mathbf{x}, \mathbf{h})}{q(\mathbf{h}|\mathbf{x})}  \bigg]  \\ \notag
&= \underbrace{ \mathbb{E}_{q(\mathbf{h} | \mathbf{x})} \bigg[ \log  p(\mathbf{x} |\mathbf{h})  \bigg] }_{  \parbox{10.5em}{Reconstruction of the data}}  -  \underbrace{  \mathbb{E}_{q(\mathbf{h}| \mathbf{x})} \bigg[  \log q(\mathbf{h}|\mathbf{x}) - \log p( \mathbf{h}) \bigg] }_{\textbf{\text{KL}}} \, .   \notag
\end{align} 
Similarly the KL term can be expanded as in the tree structures.
For nodes in layer $l$
\begin{align*}
\textbf{\text{KL}}^{l:L}
= &\mathbb{E}_{q(\mathbf{h}^{1:L}|\mathbf{x})}\big[  \log q(\mathbf{h}^{l:L}|\mathbf{h}^{1:l-1})- \log p(\mathbf{h}^{l:L})  \big].
\end{align*} 
Note that $ch(l)$ may include nodes from layers lower than $l-1$, and $pa(l)$ may include nodes from layers higher than $l$.
Some nodes in $l$ may not have parent. Based on conditional independence with the topology order of a DAG, we have
\begin{align} \label{eq:dag_chain_q}
&q(\mathbf{h}^{l:L}|\mathbf{h}^{1:l-1})\\ \notag
=&q(\mathbf{h}^{l}|\mathbf{h}^{1:l-1})q(\mathbf{h}^{l+1:L}|\mathbf{h}^{l})\\ \label{eq:dag_chain_p}
=&q(\mathbf{h}^{l}|\mathbf{h}^{1:l-1})q(\mathbf{h}^{l+1:L}|\mathbf{h}^{1:l}) p(\mathbf{h}^{l:L}) \\ \notag
=& p(\mathbf{h}^{l}|\mathbf{h}^{l+1:L})p(\mathbf{h}^{l+1:L})
\end{align}
Following~(\ref{eq:kl_lL}) and with~(\ref{eq:dag_chain_q}-\ref{eq:dag_chain_p}), we have
 \begin{align} \notag
\textbf{\text{KL}}^{l:L}
=&  \mathbb{E}_{q(\mathbf{h}^{1:L}|\mathbf{x})}\big[  \log q(\mathbf{h}^{l}|\mathbf{h}^{1:l-1})   - \log p(\mathbf{h}^{l}|\mathbf{h}^{l+1:L}) \big] \\ \notag
&+ \underbrace{\mathbb{E}_{q(\mathbf{h}^{l:L}|\mathbf{x})}\big[  \log q(\mathbf{h}^{l+1:L}|\mathbf{h}^{1:l})- \log p(\mathbf{h}^{l+1:L})  \big]}_{\mathbf{KL}^{l+1:L}}.
\end{align}
Furthermore,
\begin{align*}
q(\mathbf{h}^{l}|\mathbf{h}^{1:l-1})=q(\mathbf{h}^{l}|\mathbf{h}^{ch(l)}), \quad  \quad   p(\mathbf{h}^{l}|\mathbf{h}^{l+1:L}) = p(\mathbf{h}^{l}|\mathbf{h}^{pa(l)}).
\end{align*}
Hence,
 \begin{align} \label{eq:dag_kl_lL}
\textbf{\text{KL}}^{l:L}
=&  \underbrace{\mathbb{E}_{q(\mathbf{h}^{1:L}|\mathbf{x})}\big[ \log q(\mathbf{h}^{l}|\mathbf{h}^{ch(l)})  - \log p(\mathbf{h}^{l}|\mathbf{h}^{pa(l)}) \big]}_{\textbf{\text{KL}}^{l}} +\textbf{\text{KL}}^{l+1:L}
\end{align} 
For nodes in layer $l$,
\begin{align} \notag
\textbf{\text{KL}}^{l} =& \sum_{i \in l} \underbrace{\mathbb{E}_{q(\mathbf{h}^{1:L}|\mathbf{x})}\big[  \log q(\mathbf{h}^{(i)}|\mathbf{h}^{ch(i)})  - \log p(\mathbf{h}^{(i)}|\mathbf{h}^{pa(i)}) \big]}_{\textbf{\text{KL}}^{(i)}} .
\end{align}
Recurrently applying~(\ref{eq:dag_kl_lL}) to (\ref{eq:dag_elbo}) yields
\begin{align}\notag 
\mathcal{L}(\mathbf{x}; \theta) =& \mathbb{E}_{q(\mathbf{h}|\mathbf{x})}\big[ \log p(\mathbf{x}|\mathbf{h})  \big] -  \sum_{i \in \mathcal{V}  \setminus  \mathcal{R}_{ \mathbb{G} }} \textbf{\text{KL}}^{(i)} \\ \notag
&-    \sum_{i \in  \mathcal{R}_{ \mathbb{G} }  }  \textbf{\text{KL}}\big(q(\mathbf{h}^{(i)} | \mathbf{h}^{ch(i)} )   || p(\mathbf{h}^{(i)})  \big) .
\end{align}
For node $i$,
$$\textbf{\text{KL}}^{(i)} = \mathbb{E}_{q(\mathbf{h}|\mathbf{x})}\big[  \log q(\mathbf{h}^{(i)}|\mathbf{h}^{ch(i)})  - \log p(\mathbf{h}^{(i)}|\mathbf{h}^{pa(i)}) \big].$$

\newpage

\bibliographystyle{plainnat}
\bibliography{ref}

\end{document}